\NewDocumentCommand{\entropy}{om}{\mathbb{H}\left[#2
    \IfValueT{#1}{\,\middle|\,#1}\right]}
\NewDocumentCommand{\bentropy}{lm}
  {\widetilde{\mathbb{H}}#1\left[#2\right]}
\NewDocumentCommand{\mutualInfo}{omm}{\mathbb{I}\left[#2;#3
    \IfValueT{#1}{\,\middle|\,#1}\right]}
\newtheorem{theorem}{Theorem}
\newtheorem{lemma}{Lemma}
\newtheorem{remark}{Remark}
\newlist{enumerate*}{enumerate*}{1}
\setlist[enumerate*]{label=(\arabic*)}
\newcommand{\ben}{\begin{eqnarray}}
\newcommand{\een}{\end{eqnarray}}
\newcommand{\diag}{\mbox{diag}}
\title{Poisson Matrix Completion}
\author{\IEEEauthorblockN{Yang Cao}
\IEEEauthorblockA{H. Milton Stewart School of \\Industrial and Systems Engineering\\
Georgia Institute of Technology\\
caoyang@gatech.edu}
\and
\IEEEauthorblockN{Yao Xie}
\IEEEauthorblockA{H. Milton Stewart School of \\Industrial and Systems Engineering\\
Georgia Institute of Technology\\
yao.xie@isye.gatech.edu
}
}
\begin{document}
\maketitle

\begin{abstract}

We extend the theory of matrix completion to the case where we make Poisson observations for a subset of  entries of a low-rank matrix. We consider the (now) usual matrix recovery formulation through maximum likelihood with proper constraints on the matrix $M$, and establish theoretical upper and lower bounds on the recovery error. Our bounds are nearly optimal up to a factor on the order of $\mathcal{O}(\log(d_1 d_2))$. These bounds are obtained by adapting the arguments used for one-bit matrix completion \cite{davenport20121} (although these two problems are different in nature) and the adaptation requires new techniques exploiting properties of the Poisson likelihood function and tackling the difficulties posed by the locally sub-Gaussian characteristic of the Poisson distribution. Our results highlight a few important distinctions of Poisson matrix completion compared to the prior work in matrix completion including having to impose a minimum signal-to-noise requirement on each observed entry. We also develop an efficient iterative algorithm and demonstrate its good performance in recovering solar flare images.

\end{abstract}

\begin{IEEEkeywords}
matrix completion, Poisson noise, high-dimensional statistics, information theory
\end{IEEEkeywords}

\section{Introduction}

Matrix completion, with a goal of recovering a low-rank matrix $M \in \mathbb{R}^{d_1 \times d_2}$  from observations of a subset of its entries, attracts much interests recently due to its important real world applications including the famous Netflix problem \cite{sigkdd2007netflix}. Poisson matrix completion, where the observations are Poisson counts of a subset of the entries, is an important instance in its own as it occurs from a myriads of applications including optical imaging, nuclear medicine, low-dose x-ray imaging \cite{brady2009optical}, and network traffic analysis \cite{poissonGBG2013}.

Recently, much success has been achieved in solving the matrix completion problem using nuclear norm minimization, partly inspired by the theory of compressed sensing \cite{donoho2006compressed}. It has been shown that when $M$ is low rank, it can be recovered from only a few observations on its entries (see, e.g.\cite{candes2009exact, keshavan2010matrix, candes2010power, dai2010set, recht2010guaranteed, recht2011simpler, cai2010singular, lin2009fast, mazumder2010spectral}). Earlier work on matrix completion typically assume that the observations are noiseless, i.e., we may directly observe a subset of entries of $M$. In the real world, however, the observations are noisy, which is the focus of  the subsequent work \cite{keshavan2009matrix, candes2010matrix, negahban2011estimation, negahban2012restricted,rohde2011estimation,sonierror}, most of which consider a scenario where $M$ is the sum of a low-rank matrix with a Gaussian random matrix, i.e., the observations are a subset of entries of $M$ contaminated with Gaussian noise. Recently there has also been work which consider the more general noise models, including noisy 1-bit observations \cite{davenport20121}, which may be viewed as a case where the observations are Bernoulli random variables whose parameters depend on a underlying low-rank matrix. The other method \cite{soni2014noisy} is developed for Poisson matrix completion but it does not establish a lower bound. Another related work \cite{soniestimation} (not in the matrix completion setting) considers the case where {\it all} entries of the low-rank matrix are observed and the observations are Poisson counts of the entries of the underlying matrix. In the compressed sensing literature, there is a line of research for sparse signal recovery in the presence of Poisson noise \cite{raginsky2010compressed, raginsky2011performance,jiang2014minimax} and the corresponding performance bounds. The recently developed SCOPT \cite{SCOPT13, SCOPT_journal} algorithm can also be used to solve the Poisson compressed sensing problems.

In this paper, we extend the theory of matrix completion to the case of Poisson observations. We study recovery based on maximum likelihood with proper constraints on a matrix $M$ with rank less than or equal to $r$ (nuclear norm bound $\|M\|_* \leq \alpha\sqrt{r d_1 d_2}$ for some constant $\alpha$ and bounded entries $\beta \leq M_{ij}  \leq\alpha$). Note that the formulation differs from the one-bit matrix completion case in that we also require a lower bound on each entry of the matrix. This is consistent with an intuition that the value of each entry can be viewed as the signal-to-noise ratio (SNR) for a Poisson observation, and hence this essentially poses a requirement for the minimum SNR.
We also establish upper and lower bounds on the recovery error, by adapting the arguments used for one-bit matrix completion \cite{davenport20121}. The upper and lower bounds nearly match up to a factor on the order of $\mathcal{O}(\log(d_1 d_2))$, which shows that the convex relaxation formulation for Poisson matrix completion is nearly optimal. (We conjecture that such a gap is inherent to the Poisson problem).
Moreover, we also highlight a few important distinctions of Poisson matrix completion compared to the prior work on matrix completion in the absence of noise and with Gaussian noise:  (1) Although our arguments are adapted from one-bit matrix completion (where the upper and lower bounds nearly match), in the Poisson case there will be a gap between the upper and lower bounds, possibly due to the fact that Poisson distribution is only locally sub-Gaussian. In our proof, we notice that the arguments based on bounding all moments of the observations, which usually generate tight bounds for prior results with sub-Gaussian observations, do not generate tight bounds here; (2) We will need a lower bound on each matrix entry in the maximum likelihood formulation, which can be viewed as a requirement for the lowest signal-to-noise ratio (since the signal-to-noise ratio (SNR) of a Poisson observation with intensity $I$ is $\sqrt{I}$). Compared with the more general framework for $M$-estimator \cite{WainwrightReview2014}, our results are specific to the Poisson case, which may possible be stronger but do not apply generally. We also develop several simple yet efficient algorithms, including proximal and accelerated proximal gradient descent algorithms, and an algorithm which is based on singular value thresholding that we examine in details. This algorithm can be viewed as a consequence of approximating the log likelihood function by its second order Taylor expansion and invoking a theorem for exact solution of a nuclear norm regularized problem
\cite{cai2010singular}. Our algorithm is related to \cite{ji2009accelerated, wainwright2014structured, agarwal2010fast} and can be viewed as a special case where a simple closed form solution for the algorithm exists. We further demonstrate the good performance of the algorithm in recovering solar flare images.

Our formulation and results are inspired by the seminal work of one-bit matrix completion \cite{davenport20121}, yet with several important distinctions. In one-bit matrix completion, the value of each observation $Y_{ij}$ is binary-valued and hence bounded, whereas in our problem, each observation is a Poisson random variable which is unbounded; hence, the arguments involve bounding measurements have to be changed. In particular, we need to bound $\max_{ij} Y_{ij}$ when $Y_{ij}$ is a Poisson random variable with intensity $M_{ij}$. Moreover, the Poisson likelihood function is non Lipschitz (due to a bad point when $M_{ij}$ tends to zero), and hence we need to introduce a lower bound on each entry of the matrix $M_{ij}$, which can be interpreted as the lowest required SNR. Other distinctions also include analysis taking into account of the property of the Poisson likelihood function, and using Kullback-Leibler (KL) divergence as well as Hellinger distance that are different from those for the Bernoulli random variable as used in \cite{davenport20121}.

While working on this paper we realize a parallel work \cite{lafond2015low} which also studies performance bounds for low rank matrix completion with exponential family noise under more general assumptions and using a different approach for proof (Poisson noise is a special case of theirs). Their upper bound for the MSE per entry is on the order of $\mathcal{O}\left(\log(d_1 + d_2) r\max\{d_1, d_2\}/m\right)$ (our upper bound is $\mathcal{O}\left(\log(d_1 d_2)\sqrt{r(d_1+d_2)/m}\right)$), and their lower bound is on the order of $\mathcal{O}\left(r\max\{d_1, d_2\}/m\right)$ (versus our lower bound is $\mathcal{O}\left(\sqrt{r(d_1+d_2)/m}\right)$).

The rest of the paper is organized as follows. Section \ref{sec:model} sets up the formalism for Poisson matrix completion.  Section \ref{sec:method_bound} presents the matrix recovery based on constrained maximum likelihood and establishes the upper and lower bounds for the recovery accuracy. Section \ref{sec:algorithm} presents an efficient iterative algorithm that solves the maximum likelihood approximately and demonstrates its performance on recovering solar flare images. %Finally, Section \ref{sec:conclusion} concludes the paper.
All proofs are delegated to Appendix.

The notation in this paper is standard. In particular, $\mathbb{R}_+$ denotes the set of positive real numbers; $[d]=\{1,2,\ldots,d\}$; $\mathbb{I}_{[\varepsilon]}$  is the indicator function for an event $\varepsilon$; $|A|$ denotes the number of elements in a set $A$; $\mbox{diag}\{\lambda_i\}$ denotes a diagonal matrix with a set of numbers $\{\lambda_i\}$ on its diagonal; $\textbf{1}_{n \times m}$ denotes an $n$-by-$m$ matrix of all ones. Let entries of a matrix $M$ be denoted by $M_{ij}$. Let $\|M\|$ be the spectral norm which is the largest absolute singular value, $\|M\|_{F} = \sqrt{\sum_{i,j}M_{ij}^2}$ be the Frobenius norm, $\|M\|_*$ be the nuclear norm which is the sum of the singular values, and finally $\|M\|_{\infty}$ = $\max_{ij}|M_{ij}|$ be the infinity norm. Let $\mbox{rank}(M)$ denote the rank of a matrix $M$.
We say that a random variable $X$ follows Poisson distribution with parameter $\lambda$ (or $X \sim \mbox{Poisson}(\lambda)$ if its probability mass function $\mathbb{P}(X=k) = e^{-\lambda}\lambda^k/(k!)$).  We also define the KL divergence and Hellinger distance for Poisson distribution as follows: the KL divergence of two Poisson distributions with parameters $p$ and $q$, where $p,q \in \mathbb{R}_+$ is given by
$
D(p\|q) \triangleq p\log(p/q) - (p-q);
$
the Hellinger distance between two Poisson distributions with parameters $p$ and $q$ with $p,q \in \mathbb{R}_+$ is given by
$
d_H^2(p, q) \triangleq 2-2\exp\left(-\frac{1}{2}\left(\sqrt{p}-\sqrt{q}\right)^2\right),
$
We further define the average KL divergence and Hellinger distance for entries of two matrices $P$, $Q \in \mathbb{R}_+^{d_1 \times d_2}$, where each entry corresponds to the parameter of a Poisson random variable:
$$
D(P\|Q) = \frac{1}{d_1 d_2}\sum_{i,j}D(P_{ij}\|Q_{ij}),
$$
$$
d_H^2(P,Q) = \frac{1}{d_1 d_2}\sum_{i,j}d_H^2(P_{ij},Q_{ij}).
$$

\section{Formulation}
\label{sec:model}

Suppose we observe a subset of entries of a matrix $M \in \mathbb{R}_+^{d_1 \times d_2}$ on the index set $\Omega \subset [d_1] \times [d_2]$. The indices are randomly selected with $\mathbb{E}|\Omega|=m$. In other words, $\mathbb{I}_{\{(i,j) \in \Omega\}}$ are i.i.d. Bernoulli random variables with parameter $m/(d_1 d_2)$.
%Using $Y \in \mathbb{R}_+^{d_1 \times d_2}$ to denote the observation matrix,
The observations are Poisson counts of the observed matrix entries
\begin{equation}
Y_{ij} \sim  \mbox{Poisson}(M_{ij}), \quad \forall (i,j) \in \Omega.
\label{poissonmodel}
\end{equation}
Our goal is to recover the matrix $M$ from the Poisson observations $\{Y_{ij}\}_{(i,j) \in \Omega}$.

We make the following assumptions. First, we set an upper bound $\alpha>0$ for the entries of $M$ to entail the recovery problem is  well-posed \cite{negahban2012restricted}. This assumption is also reasonable in practice; for instance, $M$ may represent an image which is usually not too spiky. Second, assume the rank of $M$ is less than or equal to a positive integer $r \leq \min\{d_1,d_2\}$ (this assumption is not restrictive in that we only assume an upper bound on the rank).
The third assumption is characteristic to Poisson matrix completion: we set a lower bound $\beta>0$ for each entry $M_{ij}$. This entry-wise lower bound is required for our later analysis, and it also has an interpretation of a minimum required signal-to-noise ratio (SNR), as the SNR of a Poisson observation with intensity $I$ is $\sqrt{I}$.

%Therefore, if we say that the upper bound for $\|M\|_*$ is the fourth assumption, then the fourth assumption can be seen as a relaxation of the first and the second assumption.

We recover the matrix $M$ using a regularized maximum likelihood formulation. Note that the log-likelihood function for the Poisson observation model (\ref{poissonmodel}) is proportional to
\begin{equation}
    F_{\Omega, Y}(X) = \sum_{(i,j)\in \Omega} Y_{ij}\log X_{ij} - X_{ij},
\label{likelihood}
\end{equation}
where the subscript $\Omega$ and $Y$ indicate the random quantities involved in the maximum likelihood function $F$. Based on our assumptions, we may define a set of candidate estimators
\begin{equation}
\begin{split}
&\mathcal{S} \triangleq \left\{ X \in \mathbb{R}_+^{d_1 \times d_2} : \|X\|_* \leq \alpha \sqrt{r d_1 d_2}, \right. \\
&\qquad \qquad \qquad \qquad \left. \beta \leq X_{ij} \leq \alpha, \forall (i,j) \in [d_1] \times [d_2] \right\}.
\end{split}
\label{searchspace}
\end{equation}
Here the upper bound on the nuclear norm $\|M\|_*$ comes from combining the assumptions $\|M\|_{\infty} \leq \alpha$ and rank$(M) \leq r$, since $\|M\|_* \leq \sqrt{\mbox{rank}(M)}\|M\|_F$ and $\|M\|_F \leq \sqrt{d_1 d_2}\|M\|_{\infty}$ lead to $\|M\|_* \leq \alpha \sqrt{r d_1 d_2}$.
An estimator $\widehat{M}$ for $M$ can be obtained by solving the following convex optimization problem:
\begin{equation}
\widehat{M} = \arg \max_{X \in \mathcal{S}} F_{\Omega,Y}(X).
\label{optimization_problem}
\end{equation}
%where the $\widehat{M}$ is our estimation of $M$.

\section{Performance Bounds}
\label{sec:method_bound}

In the following, we establish an upper bound and an information theoretic lower bound on the mean square error (MSE) per entry $\|\widehat{M} - M\|_F^2/(d_1 d_2)$ for the estimator in (\ref{optimization_problem}).

%\subsection{Upper bound}

%Next, we want to know if the method has good performance. To assess our method, we show an upper bound for the mean squared errors $\|\widehat{M} - M\|_F^2/(d_1 d_2)$ in the following theorem.

\begin{theorem}[Upper bound]
\label{maintheorem}
    Assume $M \in \mathcal{S}$, $\mbox{rank}(M) = r$, $\Omega$ is chosen at random following our sampling model with $\mathbb{E}|\Omega| = m$, and $\widehat{M}$ is the solution to (\ref{optimization_problem}). Then with a probability exceeding $\left(1-C/(d_1 d_2)\right)$, we have
    \begin{equation}
    \begin{split}
       & \frac{1}{d_1 d_2} \|M-\widehat{M}\|_F^2 \leq C' \left(\frac{8\alpha T}{1-e^{-T}}\right) (\frac{\alpha \sqrt{r}}{\beta})
       \cdot \\
       &  \left( \alpha(e^2-2) + 3\log(d_1 d_2) \right) \sqrt{\frac{d_1 +d_2}{m}} \sqrt{1+\frac{(d_1+d_2)\log(d_1 d_2)}{m}}.
    \end{split}
    \label{bound:MC}
    \end{equation}
    If $m\geq (d_1+d_2)\log(d_1 d_2)$ then (\ref{bound:MC}) simplifies to
    \begin{equation}
    \begin{split}
    &\frac{1}{d_1 d_2} \|M-\widehat{M}\|_F^2 \leq \sqrt{2}C' \left(\frac{8\alpha T}{1-e^{-T}}\right)  \left(\frac{\alpha \sqrt{r}}{\beta} \right) \cdot \\
    &\left( \alpha(e^2-2) + 3\log(d_1 d_2) \right) \sqrt{\frac{d_1 + d_2}{m}}.
    \end{split}
    \label{bound:MC2}
    \end{equation}
    Above, $T, C', C$ are absolute constants.
where $T, C, C'$ are absolute constants.
%Furthermore, as long as $m\geq [\sqrt{d_1 d_2}\log(d_1 + d_2)]/2$, we have
% \begin{equation*}
%    \begin{split}
%    &\frac{1}{d_1 d_2} \|M-\widehat{M}\|_F^2 \leq C_0 \left(\frac{16 \alpha T}{1-e^{-T}}\right)  (\alpha \sqrt{r}+1 )  \frac{d_1 + d_2}{\sqrt{d_1 d_2}}.
%    \end{split}
%    \end{equation*}

\label{maintheorem}
\end{theorem}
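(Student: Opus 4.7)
The plan is to adapt the constrained M-estimation argument used for one-bit matrix completion, with two modifications specific to Poisson observations: a truncation step to handle the unbounded $Y_{ij}$, and a KL-to-Frobenius conversion that exploits the entry-wise lower bound $\beta$.

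\textbf{Reduction to an empirical process.} Starting from the optimality $F_{\Omega,Y}(\widehat{M}) \geq F_{\Omega,Y}(M)$ and introducing the deviation $G(X) = F_{\Omega,Y}(X) - F_{\Omega,Y}(M)$, a direct expectation computation using $\mathbb{E}[Y_{ij}] = M_{ij}$ and $\mathbb{P}((i,j)\in\Omega) = m/(d_1 d_2)$ shows that $\mathbb{E}[G(X)] = -m\,D(M\|X)$ in the average-KL sense defined in Section~\ref{sec:model}. Since $G(\widehat{M}) \geq 0$, this gives
\begin{equation*}
m\,D(M\|\widehat{M}) \;\leq\; \sup_{X\in\mathcal{S}} \bigl| G(X) - \mathbb{E}\,G(X) \bigr|,
\end{equation*}
so the theorem reduces to controlling the supremum of an empirical process indexed by the constraint set $\mathcal{S}$.

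\textbf{Truncation and Rademacher bound.} The integrand $Y_{ij}\log X_{ij} - X_{ij}$ is unbounded in $Y_{ij}$ and its derivative in $X_{ij}$ blows up as $X_{ij}\to 0$. I would therefore first restrict to the event $\mathcal{E} = \{\max_{ij} Y_{ij} \leq T\log(d_1 d_2)\}$, which a Chernoff plus union bound on $d_1 d_2$ Poisson variables of mean at most $\alpha$ certifies with probability $\geq 1 - C/(d_1 d_2)$ for a suitable absolute constant $T$; this is the source of the prefactor $T/(1-e^{-T})$. On $\mathcal{E}$ the integrand is Lipschitz in $X_{ij} \in [\beta,\alpha]$ with constant $\lesssim T\log(d_1 d_2)/\beta$, so symmetrization followed by the contraction principle reduces the supremum to a Rademacher complexity. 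Duality together with the nuclear-norm constraint gives $\langle A, X-M\rangle \leq 2\alpha\sqrt{r d_1 d_2}\,\|A\|$, and a Latala/matrix-Bernstein bound on the spectral norm of the weighted random matrix $\sum_{ij} \varepsilon_{ij}\,\mathbb{I}_{(i,j)\in\Omega}\,Y_{ij}\,e_i e_j^\top$ produces the $\sqrt{(d_1+d_2)/m}$ rate. A final Talagrand concentration step upgrades the in-expectation bound to an in-probability statement and yields the corrective factor $\sqrt{1+(d_1+d_2)\log(d_1 d_2)/m}$.

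\textbf{From KL to Frobenius.} Using $D(p\|q) \geq 2\,d_H^2(p,q)$ together with the elementary estimates $d_H^2(p,q) \gtrsim (\sqrt p - \sqrt q)^2$ and $(\sqrt p - \sqrt q)^2 \geq (p-q)^2/(4\alpha)$ for $p,q\in[\beta,\alpha]$, I obtain the pointwise bound $D(p\|q) \gtrsim (p-q)^2/\alpha$; averaging over entries gives $\|M-\widehat M\|_F^2/(d_1 d_2) \lesssim \alpha\,D(M\|\widehat M)$. Combining this with the previous step yields the claimed bound, with the factor $\alpha\sqrt r/\beta$ collecting the nuclear-norm radius and the log-Lipschitz constant, and the term $\alpha(e^2-2) + 3\log(d_1 d_2)$ arising from controlling the second moment of $Y\log X - X$ under the truncated Poisson.

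\textbf{Main obstacle.} The hardest step is the empirical-process bound: because Poisson tails are only locally sub-Gaussian, the moment-based approach that yields sharp rates for Bernoulli one-bit observations leaves residual $\log(d_1 d_2)$ factors here. Balancing the truncation level $T$ against the Lipschitz constant (which scales like $T\log(d_1 d_2)/\beta$) and the complement probability (which must stay at $O(1/(d_1 d_2))$) is precisely what produces the $T/(1-e^{-T})$ prefactor and is the source of the logarithmic gap between the upper and lower bounds that the authors flag in the introduction.
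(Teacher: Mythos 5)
Your overall architecture matches the paper's: reduce via the optimality of $\widehat{M}$ and the computation $\mathbb{E}[F_{\Omega,Y}(X)-F_{\Omega,Y}(M)]=-m\,D(M\|X)$ to controlling $\sup_{X\in\mathcal{S}}|F_{\Omega,Y}(X)-\mathbb{E}F_{\Omega,Y}(X)|$; handle that supremum by symmetrization, the contraction principle (exploiting the lower bound $\beta$ to make $\log X_{ij}$ a contraction), and the spectral-norm bound on $E\circ\Delta_{\Omega}$ imported from the one-bit paper (this last bound, not a separate Talagrand step, is what produces the factor $\sqrt{1+(d_1+d_2)\log(d_1d_2)/m}$); then convert KL to Hellinger to Frobenius. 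The one genuine methodological difference is how you tame the unbounded $Y_{ij}$: you truncate on a good event $\{\max_{ij}Y_{ij}\leq T\log(d_1d_2)\}$, whereas the paper avoids conditioning entirely by bounding $\mathbb{E}[\max_{ij}Y_{ij}^h]$ with $h=\log(d_1d_2)$, comparing the Poisson tail to an exponential via Lemma~\ref{extendedbernsteininequality} and then applying Markov's inequality to the $h$-th moment of the whole supremum; your route is workable but forces you to recombine the failure probability of the truncation event with the deviation bound, and the moment route is what produces the specific term $\alpha(e^2-2)+3\log(d_1d_2)$ in the statement.

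There is, however, one concrete misattribution you should fix before writing this out: the quantity $T$ in the theorem is \emph{not} a truncation level for the Poisson observations, and the factor $8\alpha T/(1-e^{-T})$ does not come from the empirical-process step at all. It comes from the Hellinger-to-Frobenius conversion (Lemma~\ref{secondlemma}): with $T=\frac{1}{8\beta}(\alpha-\beta)^2$ one has $\frac{1}{2}(\sqrt{x}-\sqrt{y})^2\leq T$ for $x,y\in[\beta,\alpha]$, and the linearization $1-e^{-z}\geq \frac{1-e^{-T}}{T}z$ on $[0,T]$ gives $d_H^2(M,\widehat M)\geq \frac{1-e^{-T}}{4\alpha T}\cdot\frac{\|M-\widehat M\|_F^2}{d_1d_2}$. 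Your step ``$d_H^2(p,q)\gtrsim(\sqrt{p}-\sqrt{q})^2$'' therefore hides a constant that depends on $\alpha$ and $\beta$ through exactly this $T$; it is not absolute, and treating it as such while simultaneously claiming the theorem's $T/(1-e^{-T})$ arises from balancing a truncation level would leave your constants inconsistent with the stated bound.
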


The proof of Theorem \ref{maintheorem} is an extension of the ingenious arguments for one-bit matrix completion \cite{davenport20121}. The extension for Poisson case here is nontrivial for various aforementioned reasons (notably the non sub-Gaussian and only locally sub-Gaussian nature of the Poisson observations). An outline of our proof is as follows. First, we establish an upper bound for the KL divergence $D(M \| X)$ for any $M,X \in \mathcal{S}$ by applying Lemma \ref{firstlemma} given in the appendix. Second, we find an upper bound for the Hellinger distance $d_H^2(M, \widehat{M})$ using the fact that the KL divergence can be bounded from below by the Hellinger distance. Finally, we bound the mean squared error in Lemma \ref{secondlemma} via the Hellinger distance.
%
%due to the fact that the entries of our observation matrix $Y$ are related to Poisson distribution instead of binary models.
%In particular, we analyze the tail probability of our Poisson and model in Lemma \ref{extendedbernsteininequality}, since Bernstein's inequality cannot be directly applied in Lemma \ref{firstlemma} to bound the KL divergence. In addition to that, we come up with  between the Hellinger distance and the mean squared error is needed (see Lemma \ref{secondlemma}).

\begin{remark}
Fixing $d_1, d_2, m$, $\alpha$ and $\beta$, the upper bound in Theorem \ref{maintheorem} increases as $r$ increases. This is consistent with the intuition that our method is better at dealing with approximately low-rank matrices (than with nearly full rank matrices). On the other hand, fixing $d_1, d_2, \alpha$, $\beta$ and $r$,  the upper bound decreases as $m$ increases, which is also consistent with our intuition that $M$ is supposed to be recovered more accurately with more observations.
\end{remark}

\begin{remark}
In the upper bound (\ref{bound:MC}), the mean-square-error per entry can be arbitrarily small, in the sense that the upper bound goes to zero as $d_1$ and $d_2$ go to infinity when the number of the measurements $m =\mathcal{O}((d_1+d_2)\log^\delta(d_1 d_2))$ ($m\leq d_1 d_2$) for $\delta >2$ when $r$ is fixed, or for $\delta > 3$ when $r$ is sublinear on the order of $o(\log (d_1 d_2))$.
\end{remark}

%
%\begin{remark}
%The specific form of $T$ is given in Lemma \ref{secondlemma}, which is related to $\alpha$ and $\beta$.
%\end{remark}
%
%\begin{remark}
%Note that when the number of observations $m$ is on the order of $\mathcal{O}(\sqrt{r}(d_1 + d_2)\log(d_1 + d_2))$, the the error bound will be on the order of $\mathcal{O}(\sqrt{r}(d_1 + d_2)/\sqrt{d_1 d_2})$ and it only grows with $r$ but not with the dimensions $d_1$ and $d_2$. Hence, we can control the error by having the number of observations $m \sim \mathcal{O}(\sqrt{r}(d_1 + d_2)\log(d_1 + d_2))$. However, unlike the upper bound for the one-bit matrix completion case, here the upper bound cannot get arbitrarily small with a larger $m$ and there will be an error on the order of $\mathcal{O}(\sqrt{r})$.
%\end{remark}

%\subsection{Lower Bound}

The following theorem establishes an information theoretic lower bound and demonstrates that  there exists an $M \in \mathcal{S}$ such that {\it any} recovery method cannot achieve a mean square error per entry less than the order of $\mathcal{O}(r\max\{d_1, d_2\}/m)$.
%This nearly meets the upper bound in Theorem \ref{maintheorem} up to a factor on the order of $\mathcal{O}(\log(d_1 d_2))$.

\begin{theorem}[Lower bound]
Fix $\alpha$, $r$, $d_1$, and $d_2$ to be such that $\alpha, d_1, d_2 \geq 1$, $r \geq 4$, $\alpha \geq 2\beta$, and $\alpha^2 r \max\{d_1,d_2\} \geq C_0$. Let $\Omega$ be any subset of $[d_1] \times [d_2]$ with cardinality $m$. Consider any algorithm which, for any $M \in \mathcal{S}$, returns an estimator $\widehat{M}$. Then there exists $M \in \mathcal{S}$ such that with probability at least $3/4$,
\begin{equation}
\frac{1}{d_1 d_2} \|M-\widehat{M}\|_F^2 \geq \min\left\{C_1, C_2 \alpha^{3/2} \sqrt{\frac{r\max\{d_1,d_2\}}{m}}\right\}
\label{lowerbound}
\end{equation}
as long as the right-hand side of (\ref{lowerbound}) exceeds $r\alpha^2 /\min\{d_1,d_2\}$, where $C_0, C_1,C_2$ are absolute constants.
\label{maintheorem2}
\end{theorem}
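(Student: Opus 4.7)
The plan is to prove the lower bound via a multi-hypothesis testing reduction based on Fano's inequality, adapting the strategy developed for one-bit matrix completion in \cite{davenport20121} to the Poisson setting. The three standard ingredients are a packing $\{M^{(1)},\ldots,M^{(N)}\}\subset\mathcal{S}$ of rank-$r$ matrices that are pairwise well-separated in Frobenius norm, a uniform upper bound on the KL divergence between the induced observation laws, and Fano's inequality to convert a small KL budget into a lower bound on the minimax identification error and hence on the MSE per entry.

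For the packing, assume $d_1\leq d_2$ without loss of generality, fix a balanced block-indexing map $Q:[d_1]\to[r]$, and for each $\epsilon\in\{-1,+1\}^{r\times d_2}$ introduce the block-constant matrix
\begin{equation*}
M^{(\epsilon)}_{kl}=\tfrac{\alpha+\beta}{2}+\gamma\,\epsilon_{Q(k),\,l}
\end{equation*}
(absorbing the constant shift into one of the $r$ blocks if necessary to keep the rank at most $r$), where $\gamma\leq(\alpha-\beta)/2$ is a free tuning scale that is positive by the hypothesis $\alpha\geq 2\beta$. Every $M^{(\epsilon)}$ lies in $[\beta,\alpha]^{d_1\times d_2}$ and satisfies the nuclear-norm bound $\|M^{(\epsilon)}\|_{*}\leq\alpha\sqrt{rd_1 d_2}$ by a direct estimate, so $M^{(\epsilon)}\in\mathcal{S}$. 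A Varshamov-Gilbert argument on the sign bits then produces a sub-collection of cardinality $N$ with $\log N\gtrsim r d_2=r\max\{d_1,d_2\}$ and pairwise Frobenius separation $\|M^{(i)}-M^{(j)}\|_F^2\gtrsim\gamma^2 d_1 d_2$.

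Next, under the Bernoulli$(m/(d_1 d_2))$ sampling model the KL between the induced laws decomposes entrywise as
\begin{equation*}
\relEntropy{\mathbb{P}_{M^{(i)}}}{\mathbb{P}_{M^{(j)}}}=\frac{m}{d_1 d_2}\sum_{k,l}D\bigl(M^{(i)}_{kl}\,\big\|\,M^{(j)}_{kl}\bigr),
\end{equation*}
and a second-order Taylor expansion of the Poisson KL $D(p\|q)=p\log(p/q)-(p-q)$ around $p=q$ on $[\beta,\alpha]$ yields $D(p\|q)\leq C(p-q)^2/\alpha$ once $\min(p,q)\gtrsim\alpha$. Summing over entries gives $\relEntropy{\mathbb{P}_{M^{(i)}}}{\mathbb{P}_{M^{(j)}}}\lesssim m\gamma^2/\alpha$. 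Fano's inequality then delivers $\sup_i\mathbb{P}(\widehat{M}\neq M^{(i)})\geq 1-(Cm\gamma^2/\alpha+\log 2)/\log N$, and tuning $\gamma$ so that this probability is at least $3/4$ (which forces roughly $\gamma^2\asymp\alpha\, r\max\{d_1,d_2\}/m$), then pairing with the packing separation, produces the advertised lower bound on $\|\widehat{M}-M\|_F^2/(d_1 d_2)$.

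The main technical obstacle is Poisson-specific: because $D(p\|q)$ blows up as $q\to 0$, the entrywise lower bound $\beta>0$ is essential for a uniform KL estimate, and the higher-order Taylor remainders of size $O((p-q)^3/q^2)$ must be absorbed into the leading $(p-q)^2/(2q)$ term, which forces $\gamma$ to be small relative to $\alpha$ and is ultimately responsible for the extra $\alpha$-factors distinguishing the Poisson bound from a Gaussian-style $rd/m$ rate. The regularity condition $\alpha^2 r\max\{d_1,d_2\}\geq C_0$ and the side condition that the right-hand side of (\ref{lowerbound}) exceed $r\alpha^2/\min\{d_1,d_2\}$ are exactly what guarantee the tuned $\gamma$ simultaneously satisfies all the constraints (nuclear-norm, entrywise box, and KL budget) defining the feasible packing.
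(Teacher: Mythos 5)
Your overall architecture (packing set, entrywise KL bound, Fano) is the same as the paper's, and your KL step is essentially the paper's Lemma \ref{KLdivergence}, which gives $D(p\|q)\le (p-q)^2/q$ exactly, so there is no need to fight higher-order Taylor remainders once all entries are bounded below by a constant multiple of $\alpha$. The genuine gap is in the packing set, and it changes the rate you prove. Your hypotheses are rank-$\le r$ block-sign matrices with perturbation size $\gamma$, so Varshamov--Gilbert gives $\log N\asymp r d_2$ \emph{independently of} $\gamma$. Fano then forces $m\gamma^2/\alpha\lesssim rd_2$, i.e.\ $\gamma^2\lesssim \alpha r d_2/m$, and the per-entry separation you inherit is $\asymp \gamma^2 \asymp \alpha rd_2/m$. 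That is the parametric rate $r\max\{d_1,d_2\}/m$ (the Lafond-style bound mentioned in the introduction), not the advertised $\alpha^{3/2}\sqrt{r\max\{d_1,d_2\}/m}$; indeed your own displayed tuning $\gamma^2\asymp\alpha r\max\{d_1,d_2\}/m$ falls strictly below the right-hand side of (\ref{lowerbound}) whenever $m\gtrsim rd_2/\alpha$, which is exactly the sample-rich regime the theorem targets. So the final sentence of your third paragraph does not follow from what precedes it.

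The missing idea is that $\mathcal{S}$ is a nuclear-norm ball rather than a set of exact rank-$r$ matrices, and the hard packing must exploit this: take matrices with entries of magnitude $\asymp\alpha\gamma$ and rank as large as $r/\gamma^2$, which still satisfy $\|X\|_*\le\sqrt{r/\gamma^2}\cdot\alpha\gamma\sqrt{d_1d_2}=\alpha\sqrt{rd_1d_2}$. This is exactly Lemma \ref{packingset}: it yields $\log|\chi|\gtrsim rd_2/\gamma^2$ with pairwise separation $\gtrsim\alpha^2\gamma^2 d_1 d_2$. After recentering the entries near $\alpha$ (so that entrywise differences are $\alpha\gamma$ and the KL denominators are $\asymp\alpha$, giving per-entry KL $\lesssim\alpha\gamma^2$), Fano gives $m\alpha\gamma^2\lesssim rd_2/\gamma^2$, hence $\gamma^4\lesssim rd_2/(\alpha m)$ and a separation $\alpha^2\gamma^2\asymp\alpha^{3/2}\sqrt{rd_2/m}$, which is the claimed bound. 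One must then check that the recentered packing respects the box constraint $\beta\le X_{ij}\le\alpha$ (this is where $\alpha\ge 2\beta$ enters) and that $r/\gamma^2\le d_1$ (this is where the side condition that the right-hand side of (\ref{lowerbound}) exceed $r\alpha^2/\min\{d_1,d_2\}$ enters). As written, your construction proves a correct but different, and in the relevant regime strictly weaker, lower bound.
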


%\Kndent {\it Proof sketch}
Similar to \cite{davenport20121, candes2013well}, proof of Theorem \ref{maintheorem2} relies on  information theoretic arguments outlined as follows. First we find a set of matrices $\chi \in \mathcal{S}$ so that the distance between any $X^{(i)}, X^{(j)} \in \chi$, identified as $\|X^{(i)}- X^{(j)}\|_F$, is sufficiently large. Then, for any $X \in \mathcal{S}$ and the recovered $\widehat{X}$, if we assume that they are sufficiently close to each other with high probability, then we can claim that $X$ is the element in the set $\mathcal{S}$ that is closest to $\widehat{X}$. Finally, by applying a generalized Fano's inequality involving KL divergence, we claim that the probability for the event that $X$ is the matrix in set $\mathcal{S}$ closest to $\widehat{X}$ must be small, which leads to a contraction and hence proves our lower bound.

\begin{remark}
The assumptions in Theorem \ref{maintheorem2} can be achieved, for example, by the following construction. First, choose an $\alpha$ such that $\alpha \geq \max\{1, 2\beta\}$, and then an $r \geq 4$. Then, for $d_1$ (or $d_2$) sufficiently large,  the conditions that  $\alpha^2 r \max\{d_1,d_2\} \geq C_0$ and the right-hand side of (\ref{lowerbound}) exceeds $r\alpha^2 /\min\{d_1,d_2\}$ are met. Since $r \leq O(\min\{d_1,d_2\}/\alpha^2)$, $M \in \mathcal{S}$, what has been chosen is approximately low-rank. In other words, no matter how large $r$ is, we can always find $d_1$ (or $d_2$) large enough so that the assumptions in Theorem \ref{maintheorem2} are satisfied and thus there exist an $M$ which can not be recovered with arbitrarily small error by any method.
\end{remark}
\begin{remark}
When $m\geq (d_1+d_2)\log(d_1 d_2)$ and $m = \mathcal{O}(r(d_1 + d_2) \log^{\delta} (d_1 d_2))$ with $\delta>2$, the ratio between the upper bound in (\ref{bound:MC2}) and the lower bound in (\ref{lowerbound}) is on the order of $\mathcal{O}(\log(d_1 d_2))$.
%$
%{\rm Constant} \cdot \frac{1}{\sqrt{r}} \left(\log(d_1+d_2) + \frac{d_1+d_2}{\sqrt{d_1 d_2}} \cdot \frac{m}{\max\{d_1,d_2\}} \right).
%$
Hence, the lower bound matches the upper bound up to a logarithmic factor.
\end{remark}
%(2) nearly match the upper bound. nearly optimal.

\section{Algorithms}
\label{sec:algorithm}

The matrix completion problem formulated in (\ref{optimization_problem}) is a Semidefinite program (SDP),  since it is a nuclear norm minimization problem with a convex feasible domain. Hence, we may solved it, for example, via the interior-point method \cite{liu2009interior}.
Although the interior-point method returns an exact solution to (\ref{optimization_problem}), it does not scale well with the dimensions of the matrix $d_1$ and $d_2$.

In the following, we will develop a set of iterative algorithms that solves the problem approximately and are more efficient than solving the problem as SDP.
In doing so, we use the framework of proximal algorithms to solve (\ref{optimization_problem}). At first, we rewrite search space $\mathcal{S}$ as the intersection of two closed and convex set in $\mathbb{R}^{d_1 \times d_2}$:
$$
\Gamma_1 \triangleq \{M \in \mathbb{R}^{d_1 \times d_2} : \|M\|_* \leq r\sqrt{d_1 d_2}\} ~\mbox{and}
$$
$$
\Gamma_2 \triangleq \{M \in \mathbb{R}^{d_1 \times d_2} : \beta \leq M_{ij}\leq \alpha, \forall (i,j) \in [d_1]\times[d_2]\},
$$
where the first set is a nuclear norm ball and the second set is a high-dimensional box. Let $f(M) \triangleq -F_{\Omega, Y}(M)$ be the negative log-likelihood function, then optimization problem (\ref{optimization_problem}) is equivalent to
\begin{equation}
\widehat{M} = \arg \min_{M \in \Gamma_1 \bigcap \Gamma_2} f(M).
\label{newoptimizationproblem}
\end{equation}
Noticing that the search space $\mathcal{S} = \Gamma_1 \bigcap \Gamma_2$ is closed and convex and $f(M)$ is a convex function, we can use proximal gradient methods to solve (\ref{newoptimizationproblem}). Let $I_{\Gamma}(M)$ be an indicator function that takes value zero if $M \in \Gamma$ and is $\infty$ if $M \in \Gamma^c$. Then problem (\ref{newoptimizationproblem}) is also equivalent to
\begin{equation}
\widehat{M} = \arg \min_{M \in \mathbb{R}^{d_1 \times d_2}} f(M) + I_{\Gamma_1 \bigcap \Gamma_2}(M).
\end{equation}

To guarantee the convergence of proximal gradient method, we need the Lipschitz constant $L>0$. In our case, Lipschitz constant $L$ is a positive number satisfying
\begin{equation}
\| \nabla f(X) - \nabla f(Y) \|_F \leq L \|X-Y\|_F, \forall X,Y \in \mathcal{S},
\label{Lipschitz}
\end{equation}
and hence $L = \alpha/\beta^2$ by the definition of our problem. Define the projection of $Y$ onto $\Gamma$ as
$$
\Pi_{\Gamma}(Y) = \arg \min_{X \in \Gamma} \| X-Y \|_F^2.
$$

\begin{algorithm}
  \caption{Proximal Gradient for Poisson Matrix Completion}
  \begin{algorithmic}[1]
    \STATE Initialize: $[M_0]_{ij} = Y_{ij}$ for $(i, j) \in \Omega$ and $[M_0]_{ij} = (\alpha+\beta)/2$ otherwise; the maximum number of iterations $K$.
    \FOR{$k = 1, 2, \ldots K $}
    \STATE $M_k = \Pi_{\mathcal{S}} (M_{k-1} - (1/L)\nabla f(M_{k-1}))$
    \ENDFOR
  \end{algorithmic}
  \label{alg:proximal1}
\end{algorithm}

Algorithm \ref{alg:proximal1} has linear convergence rate, which is established in the following theorem:
\begin{theorem}
Let $\{ M_k \}$ be the sequence generated by Algorithm \ref{alg:proximal1}. Then for any $k>1$, we have
$$
f(M_k) - f(\widehat{M}) \leq \frac{L \|M_0 - \widehat{M}\|_F^2}{2k}.
$$
\label{convergence1}
\end{theorem}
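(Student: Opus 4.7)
The plan is to invoke the classical $\mathcal{O}(1/k)$ convergence rate for projected gradient descent with step-size $1/L$ on a convex, $L$-smooth objective over a closed convex set, so the only work is to check that the Poisson setup meets the hypotheses of that template. The three ingredients are: (i) convexity of $f(M) = \sum_{(i,j)\in\Omega}(M_{ij} - Y_{ij}\log M_{ij})$, which is immediate since each summand is convex in $M_{ij}$; (ii) $L$-smoothness of $f$ on $\mathcal{S}$, already asserted as~(\ref{Lipschitz}); and (iii) the fact that $\mathcal{S} = \Gamma_1 \cap \Gamma_2$ is closed and convex, so $\Pi_{\mathcal{S}}$ is single-valued and the iterates are well-defined.

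The heart of the argument is a one-step inequality: for any $X \in \mathcal{S}$, setting $X^+ = \Pi_{\mathcal{S}}(X - L^{-1}\nabla f(X))$, for every $Z \in \mathcal{S}$
\[
f(X^+) - f(Z) \leq \frac{L}{2}\bigl(\|X - Z\|_F^2 - \|X^+ - Z\|_F^2\bigr).
\]
I would derive this by chaining three elementary facts: the quadratic upper bound $f(X^+) \leq f(X) + \langle \nabla f(X), X^+-X\rangle + (L/2)\|X^+ - X\|_F^2$ from $L$-smoothness; the first-order inequality $f(X) \leq f(Z) + \langle \nabla f(X), X - Z\rangle$ from convexity; and the variational characterization of the projection, $\langle X - L^{-1}\nabla f(X) - X^+, Z - X^+\rangle \leq 0$. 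Combining these and applying the polarization identity $2\langle X - X^+, X - Z\rangle = \|X-X^+\|_F^2 + \|X-Z\|_F^2 - \|X^+-Z\|_F^2$ yields the stated inequality.

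Applying the one-step bound with $X = M_{k-1}$, $X^+ = M_k$, and $Z = \widehat{M}$ gives a telescoping estimate $\sum_{i=1}^k (f(M_i) - f(\widehat{M})) \leq (L/2)\|M_0 - \widehat{M}\|_F^2$, while the same bound with $Z = M_{k-1}$ reads $f(M_k) - f(M_{k-1}) \leq -(L/2)\|M_k - M_{k-1}\|_F^2 \leq 0$, so $\{f(M_i)\}$ is nonincreasing. Replacing every summand on the left by the smallest term $f(M_k) - f(\widehat{M})$ then gives $k\bigl(f(M_k) - f(\widehat{M})\bigr) \leq (L/2)\|M_0 - \widehat{M}\|_F^2$, which is exactly the claim.

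The one place that really needs Poisson-specific care, and the only real obstacle I foresee, is the Lipschitz bound~(\ref{Lipschitz}) itself. The negative log-likelihood has diagonal Hessian with entries $Y_{ij}/M_{ij}^2$, which blows up as $M_{ij} \downarrow 0$, so a global $L$ depends essentially on the lower bound $\beta$ on entries of $\mathcal{S}$ (exactly the minimum-SNR assumption introduced earlier). Provided the iterates do not leave $\mathcal{S}$, which is automatic because each step terminates with a projection onto $\mathcal{S}$, the smoothness constant used in the descent lemma is valid on every iterate, and the rest of the proof is completely standard convex optimization.
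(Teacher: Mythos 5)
Your proposal is correct and follows essentially the same route as the paper's proof: the paper phrases the key one-step inequality via the gradient mapping $G_t$ and the subdifferential of the indicator function $I_{\mathcal{S}}$, which is exactly your projection variational inequality in proximal-operator notation, and then telescopes with $Z=\widehat{M}$ and uses monotonicity from $Z=M_{k-1}$ just as you do. Your closing caveat about the Lipschitz constant is fair, but the paper's proof likewise simply invokes (\ref{Lipschitz}), so no new gap is introduced.
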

Although Algorithm \ref{alg:proximal1} can be implemented easily, its linear convergence rate is not sufficiently if the Lipschitz constant $L$ is large. In such scenarios, we prefer Nesterov's accelerated method for solving this problem which is our Algorithm \ref{alg:proximal2}.
\begin{algorithm}[h!]
  \caption{Accelerated Proximal Gradient for Poisson Matrix Completion}
  \begin{algorithmic}[1]
    \STATE Initialize: $[M_0]_{ij} = Y_{ij}$ for $(i, j) \in \Omega$ and $[M_0]_{ij} = (\alpha+\beta)/2$ otherwise; $Z_0 = M_0$; the maximum number of iterations $K$.
    \FOR{$k = 1, 2, \ldots K $}
    \STATE $M_k = \Pi_{\mathcal{S}} (Z_{k-1} - (1/L)\nabla f(Z_{k-1}))$
    \STATE $Z_k = M_k + \left( (k-1)/(k+2)\right)(M_k-M_{k-1})$
    \ENDFOR
  \end{algorithmic}
  \label{alg:proximal2}
\end{algorithm}
Algorithm \ref{alg:proximal2} has faster convergence, as stated in the following Theorem \ref{convergence2}.
\begin{theorem}
Let $\{ M_k \}$ be the sequence generated by Algorithm \ref{alg:proximal2}. Then for any $k>1$, we have
$$
f(M_k) - f(\widehat{M}) \leq \frac{2L \|M_0 - \widehat{M}\|_F^2}{(k+1)^2}.
$$
\label{convergence2}
\end{theorem}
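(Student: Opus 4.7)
The plan is to recognize Algorithm \ref{alg:proximal2} as an instance of Nesterov's accelerated proximal gradient method (FISTA) applied to the composite objective $g(M) = f(M) + I_{\Gamma_1 \cap \Gamma_2}(M)$, and to invoke the standard $O(1/k^2)$ convergence analysis for it. My first step would be to verify that the two building blocks meet the hypotheses of that framework: the smooth part $f$ is convex on $\mathcal{S}$ (since $-y\log x + x$ is convex in $x$ for $x>0$ and the sum of convex functions is convex) and has an $L$-Lipschitz continuous gradient on $\mathcal{S}$ by (\ref{Lipschitz}) with $L = \alpha/\beta^2$, while the nonsmooth part $I_{\Gamma_1 \cap \Gamma_2}$ is proper, convex, and lower semicontinuous as the indicator of a closed convex set. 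Moreover, the proximal map of the indicator is exactly the Euclidean projection $\Pi_{\mathcal{S}}$, which is used in line 3 of Algorithm \ref{alg:proximal2}.

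Next I would identify the iterates: with the extrapolation weight $\theta_k = (k-1)/(k+2)$ used in line 4, the recursion for $(M_k, Z_k)$ matches FISTA with constant step $1/L$. The core ingredient is the descent inequality
\begin{equation*}
f(M_k) \leq f(Z_{k-1}) + \langle \nabla f(Z_{k-1}), M_k - Z_{k-1}\rangle + \tfrac{L}{2}\|M_k - Z_{k-1}\|_F^2,
\end{equation*}
which follows from the $L$-smoothness of $f$. Combining this with the first-order optimality of $M_k$ for the proximal subproblem yields the standard three-point lemma
\begin{equation*}
g(M_k) - g(X) \leq \tfrac{L}{2}\bigl(\|Z_{k-1}-X\|_F^2 - \|M_k - X\|_F^2\bigr), \quad \forall X \in \mathcal{S}.
\end{equation*}

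Then I would build the usual Lyapunov/estimate-sequence argument of Beck--Teboulle: choosing $X = \tfrac{k-1}{k+1}M_{k-1} + \tfrac{2}{k+1}\widehat{M}$ and $X = \widehat{M}$ alternately, and setting $t_k = (k+1)/2$ so that $t_k^2 - t_k = t_{k-1}^2$ and $U_k \triangleq t_k M_k - (t_k-1)M_{k-1}$, one shows that the quantity
\begin{equation*}
\Phi_k \triangleq \tfrac{2}{L}\,t_k^2\bigl(f(M_k)-f(\widehat{M})\bigr) + \|U_k - \widehat{M}\|_F^2
\end{equation*}
is nonincreasing in $k$. Since $\Phi_0 \leq \|M_0 - \widehat{M}\|_F^2$ and $t_k = (k+1)/2$, rearranging gives $f(M_k) - f(\widehat{M}) \leq 2L\|M_0 - \widehat{M}\|_F^2 / (k+1)^2$, which is exactly the bound in Theorem \ref{convergence2}.

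The one subtlety I anticipate is that the extrapolation point $Z_{k-1}$ produced in line 4 may lie outside $\mathcal{S}$, so strictly speaking the Lipschitz estimate (\ref{Lipschitz}) guarantees smoothness only on $\mathcal{S}$. I would address this either by noting that $f$ extends to a $C^2$ convex function with the same Lipschitz constant on a convex neighborhood of $\mathcal{S}$ containing all $Z_k$ (which holds because $f$'s Hessian blows up only as $X_{ij}\to 0$, and the convex combination in line 4 can be projected/retracted without changing the analysis), or, alternatively, by replacing $Z_{k-1}$ in the statement of FISTA by its projection onto $\mathcal{S}$, which is a standard variant with the same $O(1/k^2)$ rate. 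This domain bookkeeping is the only non-routine point; the rest of the proof is a direct transcription of the classical FISTA analysis.
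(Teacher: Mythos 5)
Your proposal is correct and follows essentially the same route as the paper: the paper's proof is exactly the Beck--Teboulle estimate-sequence argument with $a_k = 2/(k+1)$ and $V_k = M_{k-1} + a_k^{-1}(M_k - M_{k-1})$, which coincide with your $t_k=(k+1)/2$ and $U_k$, followed by the same three-point inequality, convex combination of the choices $Z=M_{k-1}$ and $Z=\widehat{M}$, and telescoping of the same Lyapunov quantity. The one place you go beyond the paper is in flagging that the extrapolated point $Z_{k-1}$ may leave $\mathcal{S}$, on which the Lipschitz bound (\ref{Lipschitz}) and Lemma \ref{UPlip} are stated; the paper applies its descent inequality at $Z_{k-1}$ without comment, so your proposed remedy (extending $f$'s smoothness to a convex neighborhood, or projecting the extrapolation) repairs a gap the paper itself leaves open.
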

The remaining of the problem is then to deal with the projection onto the search space $\mathcal{S}$. Since $\mathcal{S}$ is an intersection of two convex sets, we may use alternating projection algorithm to compute a sequence that converges to this intersection of $\Gamma_1$ and $\Gamma_2$, which is stated in Algorithm \ref{alg:alt_proj}.
\begin{algorithm}
  \caption{Alternating Projection Algorithm}
  \begin{algorithmic}[1]
    \STATE Initialize: $U_0$ is the matrix needed to be projected onto $\mathcal{S}$.
    \FOR{$j = 1, 2, \ldots  $}
    \STATE $V_j = \Pi_{\Gamma_1} (U_{j-1})$
    \STATE $U_j = \Pi_{\Gamma_2} (V_{j})$
    \STATE If $\| V_j - U_j \|_F \leq 10^{-6}$ then return $U_j$.
    \ENDFOR
  \end{algorithmic}
  \label{alg:alt_proj}
\end{algorithm}
Algorithm \ref{alg:alt_proj} is efficient if some closed forms of projection onto the convex sets can be achieved. Fortunately,  computation of the projection onto $\Gamma_2$ in our case is quite simple. Based on the definition of Frobenius norm, $[\Pi_{\Gamma_2}(Y)]_{ij}$ is: $\beta$ if $Y_{ij} < \beta$; $\alpha$ if $Y_{ij}>\alpha$; $Y_{ij}$ if $\beta \leq Y_{ij} \leq \alpha$. Even if there is no closed form expression for projection onto $\Gamma_1$, we can use TFOCS, a matlab package, to implement this step.

Similar to the construction in \cite{wainwright2014structured}, we may rewrite (\ref{optimization_problem}) as
\begin{equation}
\widehat{M} = \arg \min_{M \in \Gamma_2} f(M) + \lambda \|M\|_{*},
\label{originaloptimizationproblem}
\end{equation}
where $\lambda$ is a regularizing parameter that balances the goodness of data fit versus regularization.

The PMLSV algorithm can be derived as follows (in the same spirit as
\cite{ji2009accelerated}, \cite{rohde2011estimation}). Let $f(M) \triangleq -F_{\Omega, Y}(M)$ be the negative log-likelihood function. In the $k$th iteration, we may form a Taylor expansion of $f(M)$ around $M_{k-1}$, keep up to second term and then solve %(\ref{originaloptimizationproblem})
\begin{equation}
    M_k = \arg\min_{M \in \Gamma_2} \left[Q_{t_k}(M,M_{k-1}) + \lambda\|M\|_{*}\right],
\label{ouroptimizationproblem}
\end{equation}
with
\begin{align}
    Q_{t_k}(M,M_{k-1}) &\triangleq f(M_{k-1}) + \langle M-M_{k-1},\nabla f(M_{k-1}) \rangle \nonumber \\
    &~~~+ \frac{t_k}{2}\|M-M_{k-1}\|_F^{2}, \label{new}
\end{align}
where $\nabla f$ is the gradient of $f$, $t_k$  is the reciprocal of the step size in the $k$th iteration, which we will specify later.
By dropping and introducing terms independent of $M$ whenever needed (more details can be found in \cite{cao2014low}), (\ref{ouroptimizationproblem}) is equivalent to
\begin{align}
    &M_k = \nonumber\\ &\arg\min_{M } \left[\frac{1}{2} \left\| M- \left( M_{k-1} - \frac{1}{t_k}\nabla f(M_{k-1}) \right) \right\|_{F}^{2} + \frac{\lambda}{t_k}\|M\|_{*}\right].
\label{ourfinalproblem}
\end{align}
Using a theorem proved in \cite{cai2010singular}, we may show (in Appendix \ref{SVT}) that the exact solution to (\ref{ourfinalproblem}) is given by a form of Singular Value Thresholding (SVT):
\begin{equation}
    M_k = D_{\lambda/t_k} \left( M_{k-1} - \frac{1}{t_k}\nabla f(M_{k-1}) \right),
\label{Watkthiteration}
\end{equation}
where $D_{\tau}({\Sigma}) \triangleq \diag\{{(\sigma_i-\tau)_{+}}\}$ and $(x)_{+} = \max \{x,0\}$.

\begin{algorithm}
  \caption{PMLSV for Poisson Matrix Completion}
  \begin{algorithmic}[1]
    \STATE Initialize: $[M_0]_{ij} = Y_{ij}$ for $(i, j) \in \Omega$ and is $(\alpha+\beta)/2$ otherwise, the maximum number of iterations $K$, parameters $\eta$, and $L$.
    \FOR{$k = 1, 2, \ldots K $}
    \STATE $C = M_{k-1} - (1/L)\nabla f(M_{k-1})$
    \STATE $C = UD V^T$ \COMMENT{singular value decomposition}
    \STATE $D_{\rm new} = \diag((\diag(D)-\lambda/L)_{+})$
    \STATE $M_k = \Pi_{\Gamma_2} \left(U D_{\rm new}V^T\right)$
    \STATE If $f(M_{k}) > Q_L(M_k,M_{k-1})$ then $L=\eta L$, go to 4.
    \STATE If $|f(M_{k}) - Q_L(M_k,M_{k-1})| < 0.5/K$ then exit;
    \ENDFOR
  \end{algorithmic}
  \label{alg:main}
\end{algorithm}

The PMLSV algorithm is summarized in Algorithm \ref{alg:main}. In the algorithm description, $L$ is the reciprocal of the step size, $\eta > 1$ is a scale parameter to change the step size, and $K$ is the maximum number of iterations, which is  user specified: a larger $K$ leads to more accurate solution, and a small $K$ obtains the coarse solution quickly. %Steps $3$-$7$ generate solution to (\ref{ourfinalproblem}) in the $k$th iteration.
If the cost function value does not decrease, the step size is shortened to change the singular values more conservatively.  The algorithm terminates when the absolute difference in the cost function values between two consecutive iterations is less than $0.5/K$.

Under the assumption that the box constraint is not binding, Algorithm \ref{alg:main} is the same as that in \cite{ji2009accelerated} and convergence analysis can be found there.

Despite of its simplicity, the PMLSV algorithm has a surprisingly good performance. With the simple initialization for $M_0$, the magnitude of the gradient is typically
small at each iteration. Hence, we can ensure $M_k$ to be belong to or be close to $\Gamma$ by choosing an appropriate step size in the $k$th iteration. %Such an approximation will not affect practical performance much but will provide a large improvement for efficiency.

The complexity of PMLSV is on the order of $O(d_1^2 d_2 + d_2^3)$ (which comes from the most expensive step of performing singular value decomposition). This is much lower than the complexity of solving an SDP,  which is on the order of $O(d_1^3+d_1d_2^3+d_1^2 d_2^2)$. In particular, for a $d$-by-$d$ matrix, PMLSV algorithm has complexity $\mathcal{O}(d^2)$ versus solving the SDP has complexity $\mathcal{O}(d^3)$.

\section{Numerical example}

We demonstrate the good performance of our estimator in recovering a solar flare image. The solar flare image is of size $48$-by-$48$. We break the image into 8-by-8 patches, then collect the vectorized patches into a 64-by-36 matrix: such a matrix is well approximated by a low-rank matrix, as demonstrated in Fig. \ref{fig:solar}.

\begin{figure}
%\begin{wrapfigure}{r}{0.5\linewidth}
\vspace{-0.2in}
\begin{center}
\includegraphics[width = 0.3\linewidth]{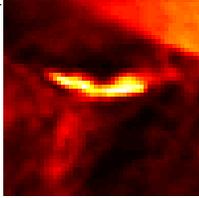}
\caption{Solar flare image of size 48-by-48 with rank 10.}
\label{fig:solar}

\end{center}\vspace{-0.25in}
\end{figure}
%\end{wrapfigure}

Suppose entries are observed using our sampling model with $\mathbb{E}|\Omega| = m$. Let $p \triangleq m/(d_1 d_2)$, then we observe $(100p)\%$ of entries. We use $L = 10^{-4}$ and $\eta=1.1$ in the PMLSV algorithm. Fig. 2 to Fig. 4 show the recovery result when $80\%$, $50\%$ and $30\%$ of the image are observed. The results show that our algorithm can recover the original image accurately when $50\%$ or above of the image  entries are observed.  In the case of only $30\%$ of the image entries are observed, our algorithm still captures the main features in the image. The PMLSV algorithm is very efficient: the running time on a laptop with 2.40Hz two core CPU and 8GB RAM for all three examples are less than $1.2$ seconds (much faster than solving SDP).

\begin{figure}[h]
\begin{center}
\begin{tabular} {cc}
\includegraphics[width = 0.3\linewidth]{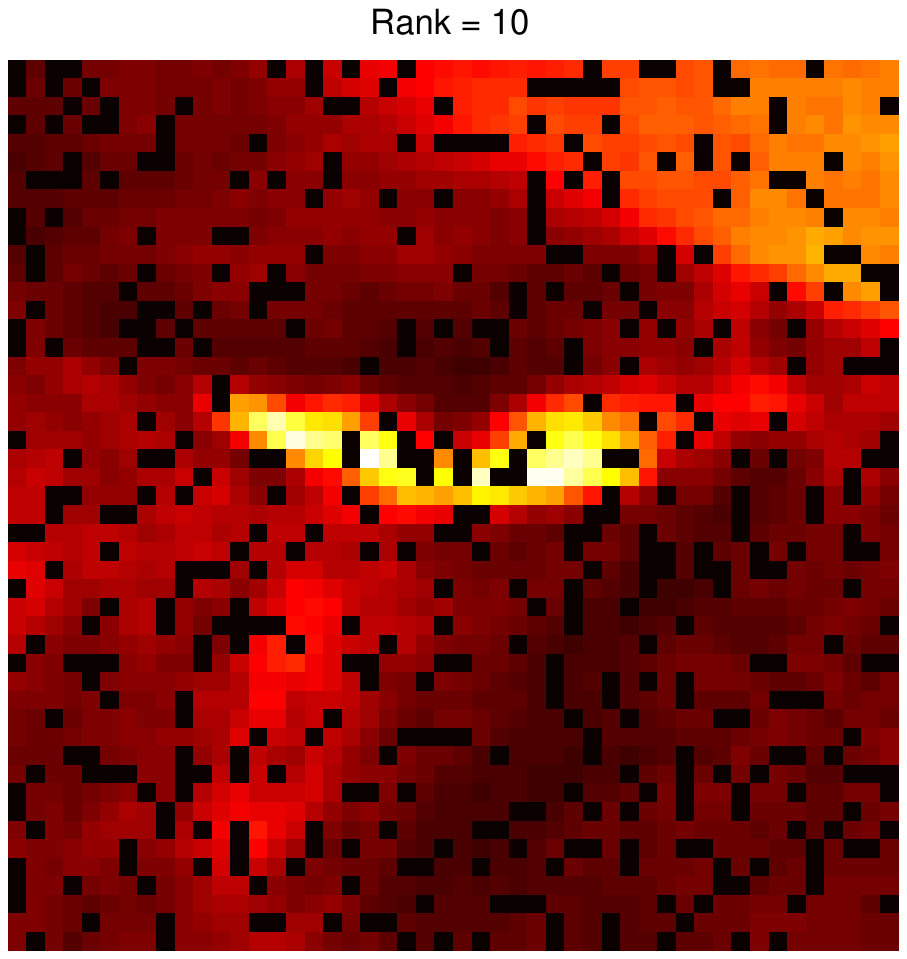} & \includegraphics[width = 0.3\linewidth]{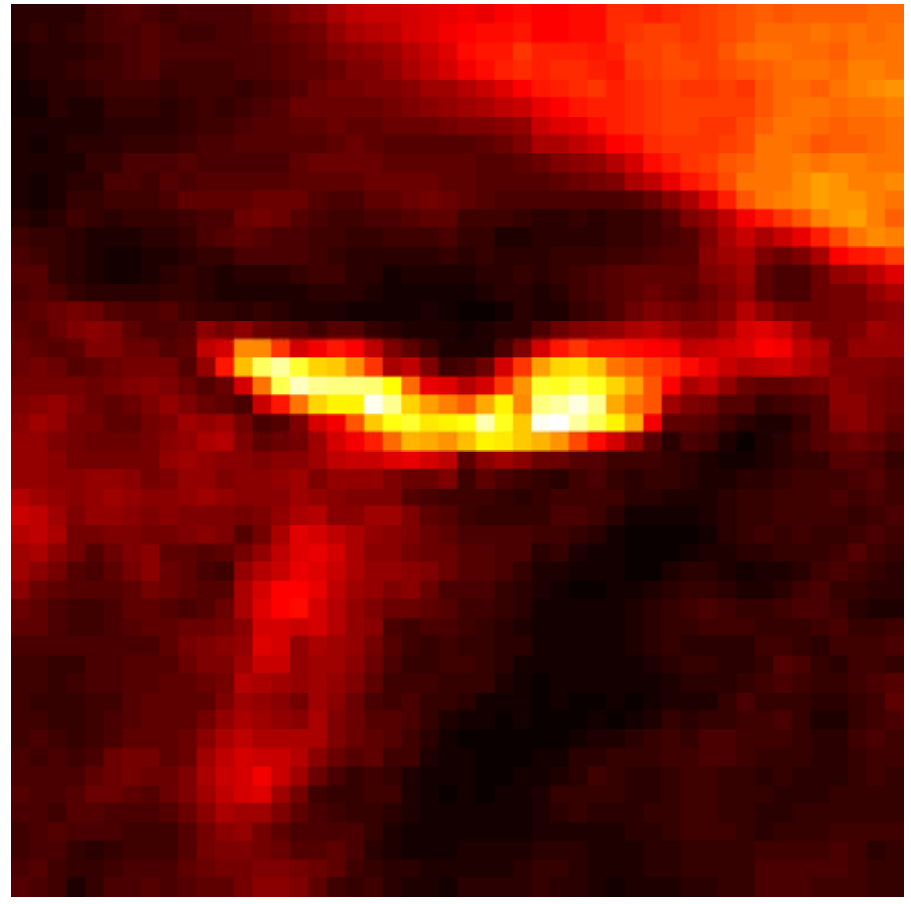} \\
(a) $p=0.8$. & (b) $\lambda=0.1, K=2000$.
\end{tabular}
\caption{(a) Observed image with $80\%$ of entries known (dark spots represent missing entries). (b) Recovered image with $\lambda=0.1$ and no more than $2000$ iterations, where the elapsed time is 1.176595 seconds.}
\end{center}
\end{figure}

\begin{figure}[h]
\begin{center}
\begin{tabular} {cc}
\includegraphics[width = 0.3\linewidth]{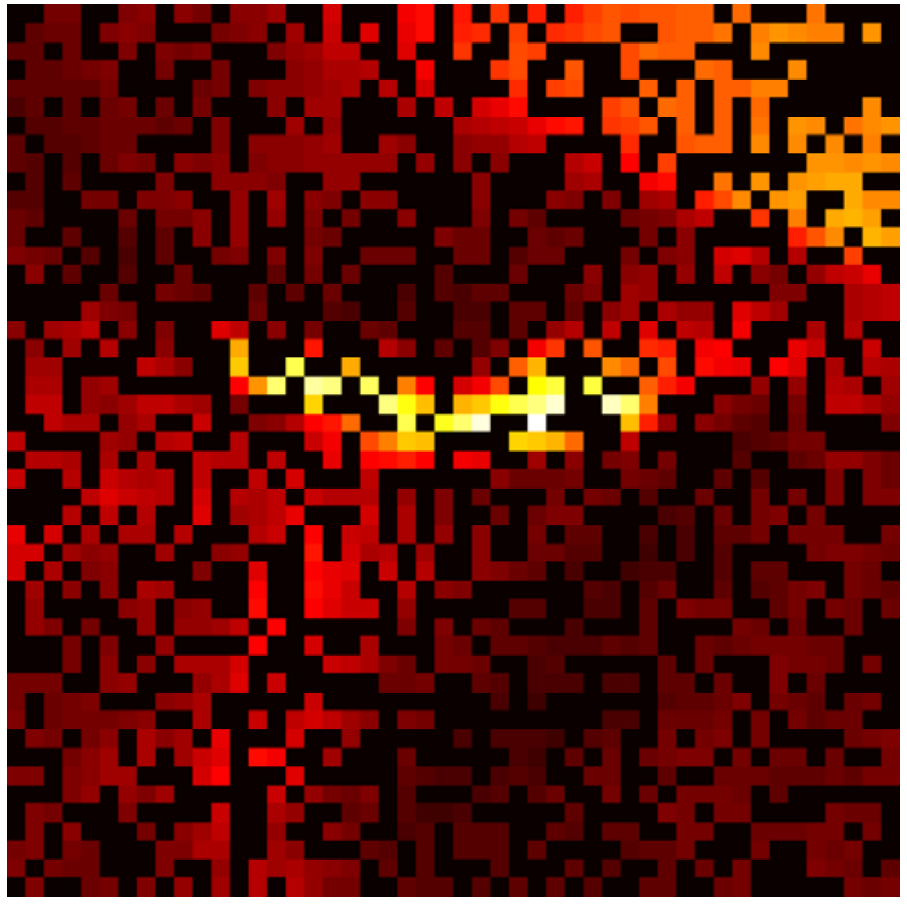} & \includegraphics[width = 0.3\linewidth]{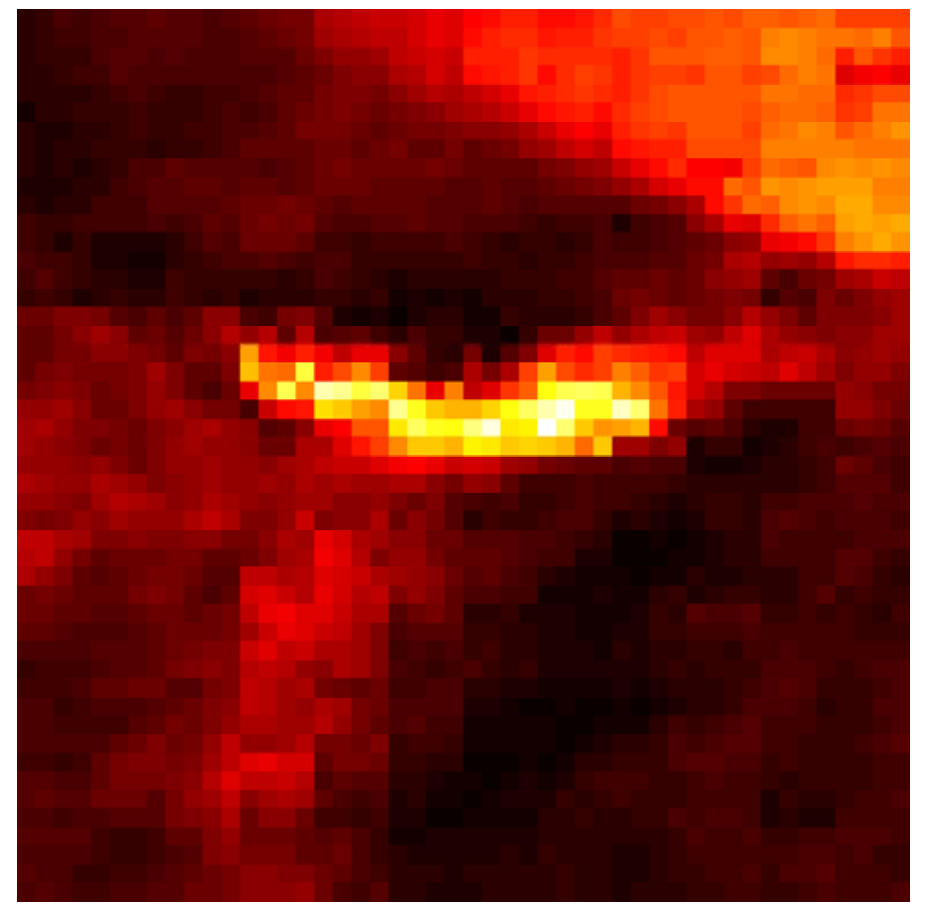} \\
(a) $p=0.5$. & (b) $\lambda=0.1, K=2000$.
\end{tabular}
\caption{(a) Observed image with $50\%$ of entries known (dark spots represent missing entries). (b) Recovered image with $\lambda=0.1$ and no more than $2000$ iterations, where the elapsed time is 1.110226 seconds.}
\end{center}
\end{figure}

\begin{figure}[h]
\begin{center}
\begin{tabular} {cc}
\includegraphics[width = 0.3\linewidth]{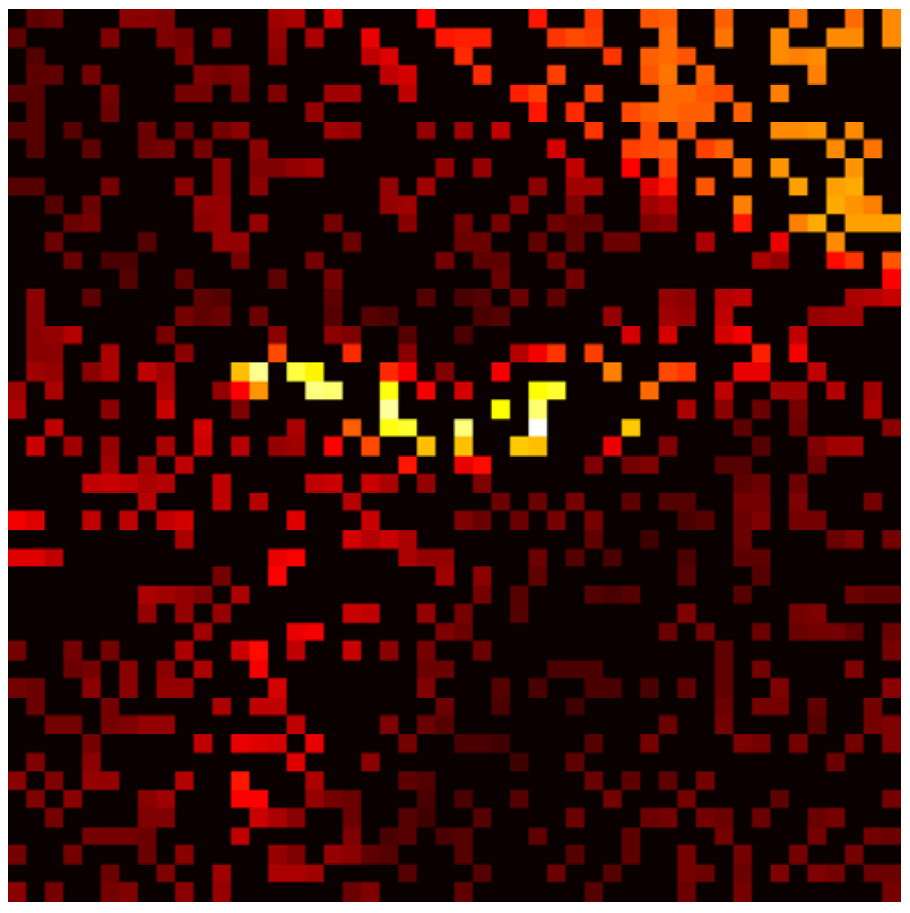} & \includegraphics[width = 0.3\linewidth]{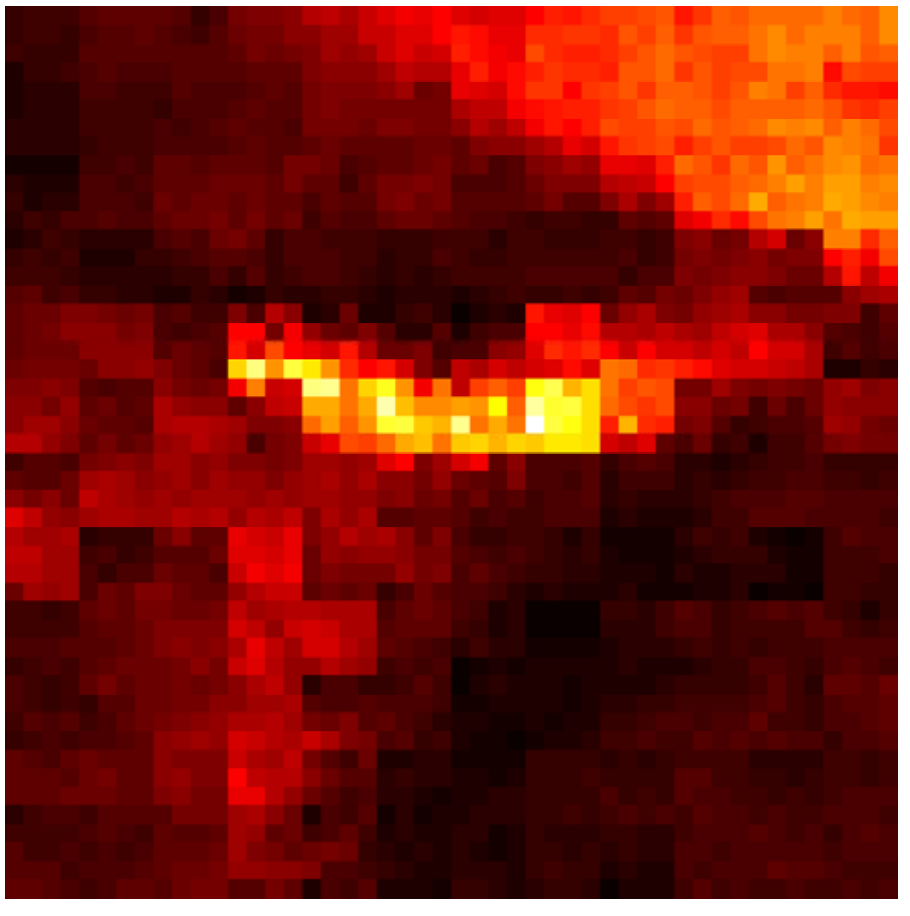} \\
(a) $p=0.3$. & (b) $\lambda=0.1, K=2000$.
\end{tabular}
\caption{(a) Observed image with $30\%$ of entries known (dark spots represent missing entries). (b) Recovered image with $\lambda=0.1$ and no more than $2000$ iterations, where the elapsed time is 1.097281 seconds.}
\end{center}
\end{figure}

%\section{Conclusions}\label{sec:conclusion}
%In this paper we solve matrix completion problem in the presence of Poisson Kse by considering a maximum likelihood formulation with constrained nuclear norm of the matrix and entries of the matrix. Future research direction ...

\section*{acknowledgement}

The authors would like to thank Prof. Yuejie Chi, Prof. Mark Davenport, and Prof. Yaniv Plan for stimulating discussions and inspiring comments. This work is partially supported by NSF grant CCF-1442635.

\bibliography{Yang_Poisson_bound}

%\clearpage

\appendices

\section{Singular value thresholding} \label{SVT}

Consider the following problem
\begin{equation}
\min_{Y \in \mathbb{R}^{d_1\times d_2}} \left\{ \frac{1}{2}\|Y-X\|_{F}^2+\tau\|Y\|_{*} \right\},
\label{singularvalueproblem}
\end{equation}
where $X \in \mathbb{R}^{d_1\times d_2}$ is given and $\tau$ is the regularization parameter.
For a matrix $X \in \mathbb{R}^{d_1\times d_2}$ with rank $r$, let its singular value decomposition be $X=U{\Sigma} V^T$, where $U \in \mathbb{R}^{d_1\times r}$, $V \in \mathbb{R}^{d_2\times r}$,
${\Sigma}=\diag(\{\sigma_i\},i=1,2,...,r)$, and $\sigma_i$ is a singular value of the matrix $X$.
For each $\tau\ \geq0$, define the {singular value thresholding operator} as:
\begin{equation}
    D_{\tau}(X) \triangleq U D_{\tau}({\Sigma})V^T.
\end{equation}
The solution to (\ref{singularvalueproblem}) is given by singular value thresholding according to the following theorem
\begin{theorem}[Theorem 2.1 in \cite{cai2010singular}]
For each $\tau\ \geq0$, and $X\in\mathbb{R}^{d_1\times d_2}$:
\begin{equation}
    D_{\tau}(X) = \arg \min_{Y \in \mathbb{R}^{d_1\times d_2}} \left\{ \frac{1}{2}\|Y-X\|_{F}^2+\tau\|Y\|_{*} \right\}.
\end{equation}
\label{theorem_cai}
\end{theorem}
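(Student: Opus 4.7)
The plan is to prove Theorem \ref{theorem_cai} by verifying the first-order optimality condition for the strictly convex objective
$$h(Y) \triangleq \tfrac{1}{2}\|Y-X\|_F^2 + \tau\|Y\|_*.$$
Because the Frobenius-norm term is strictly convex and the nuclear norm is convex, $h$ is strictly convex and coercive, so a unique minimizer $Y^\star$ exists. It therefore suffices to show that $Y^\star = D_\tau(X)$ satisfies the inclusion
$$0 \in (Y^\star - X) + \tau\, \partial\|Y^\star\|_*, \quad \text{i.e., } \quad X - Y^\star \in \tau\, \partial\|Y^\star\|_*.$$

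The next step is to use the standard characterization of the subdifferential of the nuclear norm. If $Y^\star$ has reduced SVD $Y^\star = U_0 \Sigma_0 V_0^\top$ with $\Sigma_0 \succ 0$, then
$$\partial\|Y^\star\|_* = \bigl\{ U_0 V_0^\top + W : U_0^\top W = 0,\ W V_0 = 0,\ \|W\| \leq 1 \bigr\}.$$
I would split the SVD of $X$ as $X = U_0 \Sigma_0' V_0^\top + U_1 \Sigma_1 V_1^\top$, where $\Sigma_0'$ collects singular values strictly greater than $\tau$ and $\Sigma_1$ collects those at most $\tau$. By definition of the thresholding operator, $D_\tau(X) = U_0 (\Sigma_0' - \tau I) V_0^\top$, and thus
$$X - D_\tau(X) = \tau U_0 V_0^\top + U_1 \Sigma_1 V_1^\top.$$

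I would then check that the residual decomposes as $\tau(U_0 V_0^\top + W)$ with $W = \tau^{-1} U_1 \Sigma_1 V_1^\top$. The orthogonality conditions $U_0^\top W = 0$ and $W V_0 = 0$ follow immediately from the mutual orthogonality of the SVD blocks $(U_0,U_1)$ and $(V_0,V_1)$. The spectral-norm bound $\|W\| \leq 1$ follows because the singular values of $W$ are exactly $\sigma_i/\tau$ for the indices in the second block, and these satisfy $\sigma_i \leq \tau$ by construction. This places $U_0 V_0^\top + W$ inside $\partial\|D_\tau(X)\|_*$, which establishes the optimality inclusion and hence the theorem.

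The only mildly delicate point is the corner case in which some singular values of $X$ equal $\tau$ exactly: one must make sure these indices are assigned to the second block so that the corresponding rows of $D_\tau(X)$ vanish and $\|W\| \leq 1$ still holds with equality permitted. The rest of the argument is essentially bookkeeping with block-SVDs, so I expect no substantial obstacle beyond invoking the subdifferential formula for $\|\cdot\|_*$, which is standard and can be cited.
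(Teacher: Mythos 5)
Your proof is correct. Note that the paper itself offers no proof of this statement: it is quoted directly as Theorem 2.1 of \cite{cai2010singular}, and your argument---strict convexity plus the first-order condition $X - Y^\star \in \tau\,\partial\|Y^\star\|_*$, verified through the standard subdifferential formula for the nuclear norm and the block-SVD split of $X$ at the threshold $\tau$---is essentially the same argument given in that cited reference, including the correct treatment of singular values equal to $\tau$ (and, implicitly, of the degenerate case where all singular values lie below $\tau$, so that $D_\tau(X)=0$ and the residual $X=\tau W$ with $\|W\|\leq 1$ lies in $\tau\,\partial\|0\|_*$).
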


\section{Proofs}

In the following, Lemma \ref{extendedbernsteininequality} is used in proving Lemma \ref{firstlemma}, and Lemma \ref{packingset} corresponds to Lemma 3 in \cite{davenport20121}.

\begin{lemma}
    Assuming $Y \sim$ Poisson $(\lambda)$ is a Poisson random variable with $\lambda \leq \alpha$. Then $\mathbb{P}(Y -\lambda \geq t) \leq e^{-t}, \forall t \geq t_0$ for $t_0 \triangleq \alpha (e^2-3)$.
\label{extendedbernsteininequality}
\end{lemma}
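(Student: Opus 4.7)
The plan is to apply the standard Chernoff bound to the Poisson random variable and then choose the exponential tilt parameter to match the target decay rate $e^{-t}$. Recall that for $Y \sim \text{Poisson}(\lambda)$, the moment generating function is $\mathbb{E}[e^{sY}] = \exp(\lambda(e^s - 1))$ for every $s \in \mathbb{R}$. So by Markov's inequality applied to $e^{sY}$, for any $s > 0$,
\begin{equation*}
\mathbb{P}(Y - \lambda \geq t) \;\leq\; e^{-s(\lambda + t)} \mathbb{E}[e^{sY}] \;=\; \exp\bigl(-s(\lambda + t) + \lambda(e^s - 1)\bigr).
\end{equation*}

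The key idea is that we do not optimize over $s$ (which would yield the usual sharp Poisson tail bound in terms of a relative entropy), but instead pick $s = 2$ to force the exponent to be linear in $t$ with slope $-2$, leaving a residual term proportional to $\lambda$. Substituting $s=2$ gives
\begin{equation*}
\mathbb{P}(Y - \lambda \geq t) \;\leq\; \exp\bigl(-2(\lambda + t) + \lambda(e^2 - 1)\bigr) \;=\; \exp\bigl(-2t + \lambda(e^2 - 3)\bigr).
\end{equation*}
Using the hypothesis $\lambda \leq \alpha$, the right-hand side is bounded by $\exp(-2t + \alpha(e^2 - 3)) = \exp(-2t + t_0)$. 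For $t \geq t_0$ we have $-2t + t_0 \leq -t$, which gives exactly the claimed bound $\mathbb{P}(Y - \lambda \geq t) \leq e^{-t}$.

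There is no real obstacle: the only nontrivial choice is picking $s = 2$, and once one commits to producing a residual of the form $\lambda \cdot (\text{constant})$ the value $e^2 - 3$ in the definition of $t_0$ is forced. I would present the proof in essentially the three lines above, with a brief remark that any $s > 1$ would work if one were willing to enlarge $t_0$ accordingly, and that $s = 2$ is chosen for concreteness (and because the resulting constant $e^2 - 3$ conveniently matches the form in which $t_0$ later appears through Lemma~\ref{firstlemma} in the proof of Theorem~\ref{maintheorem}).
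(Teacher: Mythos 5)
Your proof is correct and is essentially identical to the paper's: both apply Markov's inequality to $e^{\theta Y}$ with the Poisson moment generating function, fix $\theta = 2$, and observe that the residual $\lambda(e^2-3) \leq \alpha(e^2-3) = t_0$ is absorbed once $t \geq t_0$. The only difference is cosmetic (you bound $-2t + t_0 \leq -t$ directly, while the paper divides by $e^{-t}$ first).
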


%\noindent \textbf{Proof of Lemma
%\ref{extendedbernsteininequality}}
\begin{proof}

    We introduce $\theta \geq 0$,
    \begin{equation*}
    \begin{split}
        &\mathbb{P}\left( Y -\lambda \geq t\right)
        =  \mathbb{P}\left( Y \geq t + \lambda \right)\\
        =&\mathbb{P}\left(\theta Y \geq \theta\left(t+\lambda \right)\right)
        = \mathbb{P}\left(\exp\left(\theta Y \right) \geq \exp \left(\theta\left(t+ \lambda \right)\right) \right).
    \end{split}
    \end{equation*}
    Using Markov inequality, we can have
    \begin{equation*}
    \begin{split}
        &\mathbb{P}\left( Y - \lambda \geq t\right)
        \leq \exp\left(-\theta\left(t+\lambda \right)\right) \mathbb{E}\left( e^{\theta Y}\right). \\
        =& \exp(-\theta (\lambda+t)) \cdot \exp\left(\lambda (e^{\theta}-1)\right)
    \end{split}
    \end{equation*}
    Letting $\theta=2$,
    \begin{equation*}
    \begin{split}
        &\frac{\mathbb{P}\left( Y - \lambda \geq t\right)}{\exp(-t)} = \exp\left(-t + \lambda(e^2-3)\right).
    \end{split}
    \end{equation*}
    Define that
    $$
    t_0 \triangleq \alpha (e^2-3),
    $$
    to make $\frac{\mathbb{P}\left( Y-\lambda \geq t\right)}{\exp(-t)}\leq 1$, we derive that $\mathbb{P}\left( Y - \lambda \geq t\right) \leq e^{-t}$ when
    \begin{equation*}
    \begin{split}
    t\geq & t_0
    \geq \lambda (e^2-3)\\
    %\geq& t_0.
    \end{split}
    \end{equation*}
\end{proof}

\begin{lemma}
Let $F_{\Omega, Y}(X)$ be the likelihood function defined in (\ref{likelihood}) and $\mathcal{S}$ be the set defined in (\ref{searchspace}), then
\begin{equation}
\begin{split}
&\mathbb{P} \left\{ \sup_{X\in \mathcal{S}} \left| F_{\Omega, Y}(X)-\mathbb{E}F_{\Omega, Y}(X)\right| \right.\\
& \quad \left. \geq C'\left( \alpha\sqrt{r}/\beta \right) \left( \alpha(e^2-2) + 3\log(d_1 d_2) \right) \cdot \right. \\
& \quad \left. \left(\sqrt{m(d_1+d_2)+d_1 d_2 \log(d_1 d_2)}\right) \right\} \\
& \leq \frac{C}{d_1 d_2},
\end{split}
\end{equation}
where $C'$ and $C$ are absolute positive constants and the probability and the expectation are both over $\Omega$ and $Y$.
\label{firstlemma}
\end{lemma}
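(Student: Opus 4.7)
The plan is to decompose $F_{\Omega,Y}(X) - \mathbb{E}F_{\Omega,Y}(X)$, control its supremum over $\mathcal{S}$ by symmetrization followed by the Ledoux--Talagrand contraction principle, and then reduce the problem to the operator norm of a sparse random sign matrix. Because Poisson observations are unbounded, an initial truncation step is essential to produce a deterministic Lipschitz constant, and this is where the extra $\log(d_1 d_2)$ factor in the stated bound will arise.

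First I would introduce the truncation event $\mathcal{A} := \{\max_{i,j} Y_{ij} \leq K\}$ with $K := \alpha(e^2-2) + 3\log(d_1 d_2)$. For each $Y_{ij} \sim \mathrm{Poisson}(M_{ij})$ with $M_{ij}\leq\alpha$, Lemma \ref{extendedbernsteininequality} applied with $t = \alpha(e^2-3) + 3\log(d_1 d_2) \geq t_0$ yields $\mathbb{P}(Y_{ij} > K) \leq (d_1 d_2)^{-3}$, and a union bound gives $\mathbb{P}(\mathcal{A}^c) \leq (d_1 d_2)^{-2}$. On $\mathcal{A}$ the map $x \mapsto Y_{ij}\log x - x$ is Lipschitz on $[\beta,\alpha]$ with constant at most $L := K/\beta + 1$, since $|Y_{ij}/x - 1| \leq K/\beta + 1$ for $x \in [\beta,\alpha]$.

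Next I would apply the Gin\'e--Zinn symmetrization inequality (introducing an independent copy $(Y',\Omega')$ and Rademacher signs $\epsilon_{ij}$) to obtain
\[
\mathbb{E}\Bigl[\sup_{X\in\mathcal{S}} |F_{\Omega,Y}(X)-\mathbb{E}F_{\Omega,Y}(X)|;\mathcal{A}\Bigr] \leq 2\,\mathbb{E}\Bigl[\sup_{X\in\mathcal{S}}\Bigl|\sum_{ij}\epsilon_{ij}\mathbb{I}_{(i,j)\in\Omega}(Y_{ij}\log X_{ij} - X_{ij})\Bigr|;\mathcal{A}\Bigr].
\]
Conditioning on $(Y,\Omega)$ with $\mathcal{A}$ in force, each $\phi_{ij}(x) := \mathbb{I}_{(i,j)\in\Omega}(Y_{ij}\log x - x)$ is $L$-Lipschitz and vanishes off $\Omega$, so the Ledoux--Talagrand contraction principle gives
\[
\mathbb{E}_\epsilon\sup_{X\in\mathcal{S}}\Bigl|\sum_{ij}\epsilon_{ij}\phi_{ij}(X_{ij})\Bigr| \leq 2L\,\mathbb{E}_\epsilon\sup_{X\in\mathcal{S}}|\langle E, X\rangle|, \qquad E_{ij} := \epsilon_{ij}\mathbb{I}_{(i,j)\in\Omega}.
\]
Nuclear-norm/operator-norm duality then gives $|\langle E,X\rangle| \leq \|X\|_*\|E\| \leq \alpha\sqrt{r d_1 d_2}\,\|E\|$, and since $E$ has independent, mean-zero entries bounded by $1$ with variance $p := m/(d_1 d_2)$, standard matrix concentration (as used in \cite{davenport20121}) yields $\|E\| \lesssim \sqrt{p(d_1+d_2)} + \sqrt{\log(d_1 d_2)}$ with probability at least $1 - C/(d_1 d_2)$. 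Putting the pieces together produces the bound $L\cdot\alpha\sqrt{r}\,\sqrt{m(d_1+d_2) + d_1 d_2\log(d_1 d_2)}$ up to an absolute constant, matching the asserted scaling.

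Finally, to upgrade this expected-supremum bound to the high-probability tail claimed in the lemma, I would invoke Talagrand's concentration inequality for bounded empirical processes: on $\mathcal{A}$ every summand is bounded in absolute value by $K|\log\alpha| + \alpha$, and its variance is controlled by $p\alpha^2$, so the supremum concentrates around its mean up to an error of the same order except on an event of probability $O(1/(d_1 d_2))$. A union bound over the failure of $\mathcal{A}$, the spectral-norm deviation, and the Talagrand concentration event produces the claimed $C/(d_1 d_2)$ tail. The principal obstacle is the contraction step: the Lipschitz constants $Y_{ij}/\beta + 1$ depend on the unbounded Poisson observations, so one cannot treat them as deterministic without first truncating, and the resulting deterministic constant $L \propto \alpha + \log(d_1 d_2)$ is exactly the source of the extra logarithmic factor---reflecting the only locally sub-Gaussian character of the Poisson distribution---that eventually creates the gap between upper and lower bounds in Theorem \ref{maintheorem}.
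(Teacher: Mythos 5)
Your proposal is correct in outline and shares the paper's skeleton (Rademacher symmetrization, Ledoux--Talagrand contraction to strip the logarithm, nuclear/operator-norm duality, and the bound on $\|E \circ \Delta_{\Omega}\|$ from \cite{davenport20121}), but it resolves the two genuinely Poisson-specific difficulties by a different pair of tools. Where you truncate at $K = \alpha(e^2-2)+3\log(d_1d_2)$ to manufacture a deterministic Lipschitz constant and then invoke Talagrand's concentration inequality to pass from the expected supremum to a tail bound, the paper never truncates: it bounds the $h$-th moment $\mathbb{E}\bigl[\sup_{X}|F_{\Omega,Y}(X)-\mathbb{E}F_{\Omega,Y}(X)|^{h}\bigr]$ directly with $h=\log(d_1d_2)$, pulls the random multipliers out of the contraction step as a factor $\mathbb{E}[\max_{ij}Y_{ij}^{h}]$, controls that factor by comparing the Poisson tails to standard exponential variables (via the same Lemma \ref{extendedbernsteininequality} you use), and finally obtains the probability bound $C/(d_1d_2)$ by Markov's inequality applied to the $h$-th moment. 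Your route buys a cleaner conceptual picture (a single deterministic Lipschitz constant, and the $\log(d_1d_2)$ factor visibly traced to the truncation level); the paper's moment method buys freedom from Talagrand and from any truncation bookkeeping, at the cost of tracking $2^{h}$-type constants through every step. Both land on the same $K/\beta$-type prefactor and the same $\sqrt{m(d_1+d_2)+d_1d_2\log(d_1d_2)}$ scaling.

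One point you should tighten: restricting the expectation to the event $\mathcal{A}$ does not commute with the centering. After truncation, the natural centered process is $F_{\Omega,\widetilde Y}(X)-\mathbb{E}F_{\Omega,\widetilde Y}(X)$ with $\widetilde Y_{ij}=Y_{ij}\mathbb{I}_{[Y_{ij}\le K]}$, whereas the lemma is stated with the untruncated mean $\mathbb{E}F_{\Omega,Y}(X)$; you need to add a step bounding $\sup_{X\in\mathcal{S}}\bigl|\mathbb{E}F_{\Omega,Y}(X)-\mathbb{E}F_{\Omega,\widetilde Y}(X)\bigr|\le \max(|\log\beta|,\log\alpha)\sum_{ij}\tfrac{m}{d_1d_2}\,\mathbb{E}\bigl[Y_{ij}\mathbb{I}_{[Y_{ij}>K]}\bigr]$, which is negligible (of order $(d_1d_2)^{-2}$ up to polynomial factors) by the same exponential tail bound, but must be said. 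With that repair, and with the per-summand bound in the Talagrand step corrected to $K\max(|\log\beta|,\log\alpha)+\alpha$ rather than $K|\log\alpha|+\alpha$, your argument goes through.
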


%\noindent \textbf{Proof of Lemma
%\ref{firstlemma}}
\begin{proof}
In order to prove the lemma, we let
%\begin{equation}
%    \bar{F}_{\Omega, Y}(X) = F_{\Omega, Y}(X) - F_{\Omega, Y}(1) = \sum_{(i,j)\in \Omega} [y_{ij}\log X_{ij} - (X_{ij}-1)].
%\end{equation}
$\epsilon_{ij}$ are i.i.d. Rademacher random variables.
    In the following derivation, the first inequality is due the Radamacher symmetrization argument (Lemma 6.3 in \cite{ledoux1991probability}) and the second inequality is due to the power mean inequality: $(a+b)^h \leq 2^{h-1}(a^h+b^h)$ if $a,b>0$ and $h\geq 1$. Then we have
    \begin{equation}
        \begin{split}
        &\mathbb{E}\left[\sup_{X\in \mathcal{S}} \left| F_{\Omega, Y}(X) - \mathbb{E}F_{\Omega, Y}(X) \right|^h \right] \\
        &\leq 2^h \mathbb{E}\left[ \sup_{X\in \mathcal{S}} \left| \sum_{i,j} \epsilon_{ij} \mathbb{I}_{[(i,j)\in \Omega]} (Y_{ij}\log X_{ij} - X_{ij}) \right|^h \right]
        \end{split}\nonumber
        \end{equation}
        \begin{equation}
        \begin{split}
        &= 2^h \mathbb{E}\left[ \sup_{X\in \mathcal{S}} \left| \sum_{i,j} \epsilon_{ij} \mathbb{I}_{[(i,j)\in \Omega]} (Y_{ij}(-\log X_{ij})) \right. \right. \\
        & \qquad \qquad \quad \left. \left. + \sum_{i,j} \epsilon_{ij} \mathbb{I}_{[(i,j)\in \Omega]}X_{ij} \right|^h \right] \\
        \end{split}\nonumber
        \end{equation}
        %\begin{equation}
        %\begin{split}
        %&\leq 2^h \mathbb{E}\left[ 2^{h-1} \left( \sup_{X\in \mathcal{S}} \left| \sum_{i,j} \epsilon_{ij} \mathbb{I}_{[(i,j)\in \Omega]} (Y_{ij}(-\log X_{ij})) \right|^h %\right) \right. \\
        %& \quad + \left.\left( \sup_{X\in \mathcal{S}} \left| \sum_{i,j} \epsilon_{ij} \mathbb{I}_{[(i,j)\in \Omega]} X_{ij} \right|^h \right) \right]
        %        \end{split}\nonumber
        %\end{equation}
        %
        \begin{equation}
        \begin{split}
        & \leq 2^{2h-1}  \mathbb{E}\left[ \sup_{X\in \mathcal{S}} \left| \sum_{i,j} \epsilon_{ij} \mathbb{I}_{[(i,j)\in \Omega]} (Y_{ij}(-\log X_{ij})) \right|^h \right] \\
        & \quad + 2^{2h-1} \mathbb{E}\left[ \sup_{X\in \mathcal{S}} \left| \sum_{i,j} \epsilon_{ij} \mathbb{I}_{[(i,j)\in \Omega]} X_{ij} \right|^h \right],
    \end{split}
    \label{wholepart}
    \end{equation}
    where the expectation are over both $\Omega$ and $Y$.
    %Let $Z_{ij} = X_{ij} - 1$, then we know $\beta -1 \leq [Z]_{ij} \leq \alpha - 1$ and $\|Z\|_* \leq \alpha \sqrt{d_1 d_2 r} + \sqrt{d_1d_2}$.

    In the following, we will use contraction principle to further bound the first term of (\ref{wholepart}). We let $\phi(t)=-\beta \log(t+1)$. We know $\phi(0)=0$ and $|\phi^{'}(t)|=|\beta/(t+1)|$, so $|\phi^{'}(t)| \leq 1$ if $t \geq \beta-1$. Setting $Z=X-\textbf{1}_{d_1\times d_2}$, then we have $Z_{ij} \geq \beta-1, \forall (i,j) \in [d_1]\times[d_2]$ and $\|Z\|_* \leq \alpha\sqrt{rd_1 d_2} + \sqrt{d_1 d_2}$ by triangle inequality. Therefore, $\phi(Z_{ij})$ is a contraction and it vanishes at $0$. By Theorem 4.12 in \cite{ledoux1991probability} and using the fact that $|\langle A,B \rangle| \leq \|A\|\|B\|_*$, we have
    \begin{equation}
    \begin{split}
        &2^{2h-1} \mathbb{E}\left[ \sup_{X\in \mathcal{S}} \left| \sum_{i,j} \epsilon_{ij} \mathbb{I}_{[(i,j)\in \Omega]} (Y_{ij}(-\log X_{ij})) \right|^h \right] \\
        &\leq 2^{2h-1} \mathbb{E}\left[\max_{i,j} Y_{ij}^h \right]\left[ \sup_{X\in \mathcal{S}} \left| \sum_{i,j} \epsilon_{ij} \mathbb{I}_{[(i,j)\in \Omega]} ((-\log X_{ij})) \right|^h \right] \\
        &=2^{2h-1} \mathbb{E}\left[\max_{i,j} Y_{ij}^h \right] \mathbb{E}\left[ \sup_{X\in \mathcal{S}} \left| \sum_{i,j} \epsilon_{ij} \mathbb{I}_{[(i,j)\in \Omega]} \left(\frac{1}{\beta}\phi(Z_{ij})\right) \right|^h \right] \\
        &\leq 2^{2h-1}\left(\frac{2}{\beta}\right)^h \mathbb{E}\left[\max_{i,j} Y_{ij}^h \right] \mathbb{E}\left[ \sup_{X\in \mathcal{S}} \left| \sum_{i,j} \epsilon_{ij} \mathbb{I}_{[(i,j)\in \Omega]} Z_{ij}) \right|^h \right] \\
        &=2^{2h-1}\left(\frac{2}{\beta}\right)^h \mathbb{E}\left[\max_{i,j} Y_{ij}^h \right] \mathbb{E}\left[ \sup_{X\in \mathcal{S}}\left| \langle \Delta_{\Omega} \circ E , Z \rangle \right|^h \right]\\
        &\leq 2^{2h-1}\left(\frac{2}{\beta}\right)^h \mathbb{E}\left[\max_{i,j} Y_{ij}^h \right] \mathbb{E}\left[ \sup_{X \in \mathcal{S}} \|E \circ \Delta_{\Omega} \|^h \|Z\|_*^h \right] \\
        &\leq2^{2h-1}\left(\frac{2}{\beta}\right)^h \left(\alpha\sqrt{r}+1\right)^h \left(\sqrt{d_1 d_2 }\right)^h \mathbb{E}\left[\max_{i,j} Y_{ij}^h \right] \mathbb{E} \left[\|E \circ \Delta_{\Omega} \|^h\right],
    \end{split}
    \label{firstterm}
    \end{equation}
    where $E$ denotes the matrix with entries given by $\epsilon_{ij}$, $\Delta_{\Omega}$ denotes the indicator matrix for $\Omega$ and $\circ$ denotes the Hadamard product.

    Similarly, the second term of (\ref{wholepart}) can be bounded as follows:
    \begin{equation}
    \begin{split}
    &2^{2h-1} \mathbb{E}\left[ \sup_{X\in \mathcal{S}} \left| \sum_{i,j} \epsilon_{ij} \mathbb{I}_{[(i,j)\in \Omega]} X_{ij} \right|^h \right] \\
    \leq &2^{2h-1} \mathbb{E}\left[ \sup_{X \in \mathcal{S}} \|E \circ \Delta_{\Omega} \|^h \|X\|_*^h \right] \\
    \leq &2^{2h-1} \left(\alpha \sqrt{r}\right)^h \left(\sqrt{d_1 d_2}\right)^h \mathbb{E}\left[ \|E \circ \Delta_{\Omega} \|^h \right].
    \end{split}
    \label{secondterm}
    \end{equation}

    Plugging (\ref{firstterm}) and (\ref{secondterm}) into (\ref{wholepart}), we have
    \begin{equation}
    \begin{split}
    &\mathbb{E}\left[\sup_{X\in \mathcal{S}} \left| F_{\Omega, Y}(X) - \mathbb{E}F_{\Omega, Y}(X) \right|^h \right] \\
    \leq &2^{2h-1} \left(\alpha \sqrt{r}+1\right)^h \left(\sqrt{d_1 d_2}\right)^h \mathbb{E}\left[ \|E \circ \Delta_{\Omega} \|^h \right] \cdot\\
    & \left( \left(\frac{2}{\beta}\right)^h \mathbb{E}\left[\max_{i,j} Y_{ij}^h \right] +1 \right).
    \label{secondderivation}
    \end{split}
    \end{equation}

    To bound $\mathbb{E} \left[\|E \circ \Delta_{\Omega} \|^h\right]$, we can use the result from \cite{davenport20121} if we take $h=\log(d_1 d_2) \geq 1$:
    \begin{equation}
    \begin{split}
    &\mathbb{E} \left[\|E \circ \Delta_{\Omega} \|^h\right] \\
    \leq& C_0 \left(2(1+\sqrt{6})\right)^h \left( \sqrt{\frac{m(d_1+d_2)+d_1 d_2 \log(d_1 d_2)}{d_1 d_2}} \right)^h
    \end{split}\nonumber
    \end{equation}
    for some constant $C_0$. Therefore, the only term we need to bound is $\mathbb{E}\left[ \max_{i,j} Y_{ij}^{h}\right] $.

    From Lemma \ref{extendedbernsteininequality}, if $t \geq t_0$, then for any $(i,j) \in [d_1]\times [d_2]$, the following inequality holds since $t_0 > \alpha$:
    \begin{equation}
    \begin{split}
    \mathbb{P}\left( \left| Y_{ij} - M_{ij} \right| \geq t \right)
    & = \mathbb{P}\left( Y_{ij} \geq M_{ij} +  t \right) + \mathbb{P}\left( Y_{ij} \leq M_{ij} -  t \right) \\
    & \leq \exp(-t) + 0\\
    & = \mathbb{P}(W_{ij} \geq t),
    \label{exponentialappro}
    \end{split}
    \end{equation}
    where $W_{ij}$ are independent standard exponential random variables.

    Below we use the fact that for any positive random variable $X$, we can write
    $
    \mathbb{E}X = \int_{0}^{\infty}\mathbb{P}(X\geq t) dt,
    $
    allowing us to bound
    \begin{equation}
    \begin{split}
    & \mathbb{E}\left[ \max_{i,j} Y_{ij}^{h}\right] \\
    %& \leq \sqrt{\frac{m}{d_1 d_2}\max_{1\leq i \leq d_1}\sum_{j=1}^{d_2}M_{ij}} + \left(\mathbb{E}\left[ \max_{1\leq i \leq d_1} \left| \sum_{j=1}^{d_2}
    & \leq 2^{2h-1} \left( \alpha^h + \mathbb{E}\left[ \max_{i,j} \left| Y_{ij}-M_{ij} \right|^{h}\right] \right) \\
    & = 2^{2h-1} \left( \alpha^h + \int_{0}^{\infty} \mathbb{P} \left( \max_{i,j} \left| Y_{ij}-M_{ij} \right|^{h} \geq t \right) dt \right) \\
    & \leq 2^{2h-1} \left( \alpha^h + (t_0)^h + \int_{(t_0)^h}^{\infty} \mathbb{P} \left( \max_{i,j} \left| Y_{ij}-M_{ij} \right|^{h} \geq t \right) dt \right) \\
    & \leq 2^{2h-1} \left( \alpha^h + (t_0)^h + \int_{(t_0)^h}^{\infty} \mathbb{P} \left( \max_{i,j} W_{ij}^{h} \geq t \right) dt \right) \\
    & \leq 2^{2h-1} \left( \alpha^h + (t_0)^h + \mathbb{E} \left[ \max_{i,j} W_{ij}^{h}\right] \right)
    \end{split}
    \end{equation}

      Above, firstly we use triangle inequality and power mean inequality, then along with independence, we use (\ref{exponentialappro}) in the third inequality. By standard computations for exponential random variables,
    \begin{equation}
    \mathbb{E} \left[ \max_{i,j} W_{ij}^h\right] \leq 2h! + \log^{h}(d_1 d_2).
    \end{equation}
    Thus, we have
    \begin{equation}
    \begin{split}
    &\mathbb{E}\left[ \max_{i,j} Y_{ij}^{h}\right] \leq 2^{2h-1} \left( \alpha^h + (t_0)^h + 2h! + \log^{h}(d_1 d_2) \right).
    \label{onlyterm}
    \end{split}
    \end{equation}

    Therefore, combining (\ref{onlyterm}) and (\ref{secondderivation}), we have
    \begin{equation}
    \begin{split}
    &\mathbb{E}\left[\sup_{X\in \mathcal{S}} \left| F_{\Omega, Y}(X) - \mathbb{E}F_{\Omega, Y}(X) \right|^h \right] \\
    \leq &2^{4h-1} \left(\alpha \sqrt{r}+1\right)^h \left(\sqrt{d_1 d_2}\right)^h \mathbb{E}\left[ \|E \circ \Delta_{\Omega} \|^h \right] \cdot\\
    &\left(\frac{2}{\beta}\right)^h \left( \alpha^h + (t_0)^h + 2h! + \log^{h}(d_1 d_2) \right).
    \end{split}
    \end{equation}
    Then,
    \begin{equation}
    \begin{split}
    &\left(\mathbb{E}\left[\sup_{X\in \mathcal{S}} \left| F_{\Omega, Y}(X) - \mathbb{E}F_{\Omega, Y}(X) \right|^h \right]\right)^{\frac{1}{h}} \\
    \leq &16 \left(\alpha \sqrt{r}+1\right) \left(\sqrt{d_1 d_2}\right) \mathbb{E}\left[ \|E \circ \Delta_{\Omega} \|^h \right]^{\frac{1}{h}} \cdot\\
    &\left(\frac{2}{\beta}\right) \left( \alpha + t_0 + 2h + \log(d_1 d_2) \right) \\
    \leq &16 \left(\frac{2}{\beta}\right) \left(\alpha \sqrt{r}+1\right) \left(\sqrt{d_1 d_2}\right) \mathbb{E}\left[ \|E \circ \Delta_{\Omega} \|^h \right]^{\frac{1}{h}} \cdot\\
    & \left( \alpha(e^2-2) + 3\log(d_1 d_2) \right) \\
    \leq &128\left(1+\sqrt{6}\right) C_0^{\frac{1}{h}} \left(\frac{\alpha\sqrt{r}}{\beta} \right) \left( \alpha(e^2-2) + 3\log(d_1 d_2) \right) \cdot\\
    &  \left(\sqrt{m(d_1+d_2)+d_1 d_2 \log(d_1 d_2)}\right).
    \end{split}
    \end{equation}
    where we use the fact that $(a^h + b^h + c^h + d^h)^{1/h} \leq a+b+c+d$ if $a,b,c,d>0$ in the first inequality and we take $h=\log(d_1 d_2)\geq 1$ in the second inequality.

    Moreover when $C'\geq 128\left(1+\sqrt{6}\right)e$,
    $$
    C_0\left( \frac{128(1+\sqrt{6})}{C'} \right)^{\log(d_1 d_2)} \leq \frac{C_0}{d_1 d_2}
    $$

    Therefore we can use Markov inequality to see that
    \begin{equation}
    \begin{aligned}
    &\mathbb{P} \left\{ \sup_{X\in \mathcal{S}} \left| F_{\Omega, Y}(X)-\mathbb{E}F_{\Omega, Y}(X)\right| \right. \\
    & \quad \left. \geq C'\left( \alpha\sqrt{r}/\beta \right) \left( \alpha(e^2-2) + 3\log(d_1 d_2) \right) \cdot \right. \\
    & \quad \left. \left(\sqrt{m(d_1+d_2)+d_1 d_2 \log(d_1 d_2)}\right) \right\} \\
    =&\mathbb{P} \left\{ \sup_{X\in \mathcal{S}} \left| F_{\Omega, Y}(X)-\mathbb{E}F_{\Omega, Y}(X)\right|^h \right. \\
    & \quad \left. \geq C'\left( \alpha\sqrt{r}/\beta \right) \left( \alpha(e^2-2) + 3\log(d_1 d_2) \right) \cdot \right. \\
    & \quad \left. \left(\sqrt{m(d_1+d_2)+d_1 d_2 \log(d_1 d_2)}\right) \right\} \\
    \leq & \mathbb{E}\left[\sup_{X\in \mathcal{S}} \left| F_{\Omega, Y}(X) - EF_{\Omega, Y}(X) \right|^h \right]/\\
    &
    \{\left(C'\left( \alpha\sqrt{r}/\beta \right) \left( \alpha(e^2-2) + 3\log(d_1 d_2) \right) \right.\cdot \\
    &\left.\left(\sqrt{m(d_1+d_2)+d_1 d_2 \log(d_1 d_2)}\right) \right)^h \}\\
    \leq & \frac{C}{d_1  d_2}, \nonumber
    \end{aligned}
    \end{equation}
    where $C' \geq 128(1+\sqrt{6})e$ and $C$ are absolute constants.

\end{proof}

\begin{lemma}
Let $\beta \leq M_{ij}, \widehat{M}_{ij} \leq \alpha, \forall (i,j) \in [d_1] \times [d_2]$, then
$$
d_H^2(M,\widehat{M}) \geq \frac{1-e^{-T}}{4\alpha T} \frac{\|M-\widehat{M}\|_F^2}{d_1 d_2},
$$
where $T=\frac{1}{8\beta}(\alpha-\beta)^2$.
\label{secondlemma}
\end{lemma}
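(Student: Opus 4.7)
The plan is to reduce the matrix-level inequality to an entry-wise inequality. Since both $d_H^2(M,\widehat{M})$ and $\|M-\widehat{M}\|_F^2/(d_1 d_2)$ are averages over the $d_1 d_2$ coordinates, it suffices to show, for all $p,q$ with $\beta\leq p,q\leq \alpha$, that
\begin{equation*}
d_H^2(p,q) \;\geq\; \frac{1-e^{-T}}{4\alpha T}\,(p-q)^2.
\end{equation*}
Summing over $(i,j)$ and dividing by $d_1 d_2$ then yields the stated bound directly.

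To establish this per-entry inequality, I would set $u \triangleq \tfrac{1}{2}(\sqrt{p}-\sqrt{q})^2$, so that by the definition of the Hellinger distance, $d_H^2(p,q) = 2(1-e^{-u})$. The key analytic step is a secant-line bound: since $g(u) = 1-e^{-u}$ is concave with $g(0)=0$, on any interval $[0,T]$ we have $g(u) \geq (u/T)\,g(T)$, i.e.\ $1-e^{-u} \geq u(1-e^{-T})/T$. Applying this yields
\begin{equation*}
d_H^2(p,q) \;\geq\; \frac{1-e^{-T}}{T}\,(\sqrt{p}-\sqrt{q})^2,
\end{equation*}
provided one can verify that $u \leq T$. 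This verification uses the lower bound $\beta$: since $\sqrt{p}+\sqrt{q} \geq 2\sqrt{\beta}$ and $|p-q|\leq \alpha-\beta$, we get $(\sqrt{p}-\sqrt{q})^2 = (p-q)^2/(\sqrt{p}+\sqrt{q})^2 \leq (\alpha-\beta)^2/(4\beta)$, so indeed $u \leq (\alpha-\beta)^2/(8\beta) = T$.

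The remaining step is the simple algebraic identity $(\sqrt{p}-\sqrt{q})^2 = (p-q)^2/(\sqrt{p}+\sqrt{q})^2$ combined with the upper bound $\sqrt{p}+\sqrt{q}\leq 2\sqrt{\alpha}$, which gives $(\sqrt{p}-\sqrt{q})^2 \geq (p-q)^2/(4\alpha)$. Chaining these two inequalities produces the desired per-entry bound with constant $(1-e^{-T})/(4\alpha T)$.

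I do not expect a serious obstacle here; the argument is essentially a comparison between the Hellinger and Euclidean geometries on Poisson parameters. The only subtle point, and the one that highlights why the lower bound $\beta$ is needed in the formulation, is verifying $u\in[0,T]$ so that the linear-in-$u$ underestimate of $1-e^{-u}$ is valid. Without the entry-wise lower bound $\beta>0$, the exponent $u$ would be unbounded (since $(\sqrt{p}+\sqrt{q})^2$ can vanish) and this secant-line trick would collapse, which is consistent with the paper's remark that Poisson matrix completion requires a minimum-SNR assumption.
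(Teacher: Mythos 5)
Your proof is correct and follows essentially the same route as the paper: reduce to a per-entry bound, check that $u=\tfrac12(\sqrt{p}-\sqrt{q})^2\le T$ using the lower bound $\beta$, apply the secant-line (concavity) estimate $1-e^{-u}\ge u(1-e^{-T})/T$, and convert $(\sqrt{p}-\sqrt{q})^2$ to $(p-q)^2/(4\alpha)$. The only cosmetic difference is that the paper invokes the mean value theorem to write $\sqrt{x}-\sqrt{y}=\frac{1}{2\sqrt{\xi}}(x-y)$ with $\xi\in[\beta,\alpha]$, whereas you use the equivalent identity $(\sqrt{p}-\sqrt{q})^2=(p-q)^2/(\sqrt{p}+\sqrt{q})^2$ with the bounds $2\sqrt{\beta}\le\sqrt{p}+\sqrt{q}\le 2\sqrt{\alpha}$.
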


\begin{proof}
Assuming $x$ is any entry in $M$ and $y$ is any entry in $\widehat{M}$, then $\beta \leq x,y \leq \alpha$ and $0 \leq |x-y| \leq \alpha-\beta$.
By the mean value theorem there exists an $\xi(x,y)\in [\beta,\alpha]$ such that
$$
\hspace{-0.15in}\frac{1}{2} (\sqrt{x}-\sqrt{y})^2 = \frac{1}{2} \left(\frac{1}{2\sqrt{\xi(x,y)}}(x-y)\right)^2 = \frac{1}{8\xi(x,y)}(x-y)^2 \leq T.
$$
The function $f(z)=1-e^{-z}$ is concave in $[0,+\infty]$, so if $z\in [0,T]$, we may bound it from below with a linear function
\begin{equation}
1-e^{-z} \geq \frac{1-e^{-T}}{T}z.
\label{lemma2use}
\end{equation}
Plugging $z=\frac{1}{2} (\sqrt{x}-\sqrt{y})^2 = \frac{1}{8\xi(x,y)}(x-y)^2 $ in (\ref{lemma2use}), we have
\begin{equation}
\begin{split}
&2-2\exp\left(-\frac{1}{2} (\sqrt{x}-\sqrt{y})^2 \right) \geq \frac{1-e^{-T}}{T} \frac{1}{4\xi(x,y)}(x-y)^2 \\
&\geq \frac{1-e^{-T}}{T} \frac{1}{4\alpha}(x-y)^2.
\label{lemma2use2}
\end{split}
\end{equation}
Note that (\ref{lemma2use2}) holds for any $x$ and $y$. This concludes the proof.
\end{proof}

\begin{lemma}
Let $H \triangleq \left\{ M : \|M\|_* \leq \alpha \sqrt{r d_1 d_2}, \|M\|_{\infty} \leq \alpha \right\}$ and $\gamma \leq 1$ be such that $\frac{r}{\gamma^2}$ is an integer.
Suppose  $r/\gamma^2 \leq d_1$, then we may construct a set $\chi \in H$ of size
$$
|\chi| \geq \exp\left( \frac{r d_2}{16\gamma^2} \right)
$$
with the following properties:

1. For all $X \in \chi$, each entry has $|X_{ij}| = \alpha \gamma$.

2. For all $X^{(i)}$,$X^{(j)} \in \chi$, $i\neq j$,
$
\|X^{(i)} - X^{(j)} \|_F^2 > \alpha^2 \gamma^2 d_1 d_2/2.
$
\label{packingset}
\end{lemma}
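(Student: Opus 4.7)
The plan is a standard Gilbert--Varshamov / probabilistic-method packing argument, tailored so that the constructed matrices sit in $H$ and have the required entrywise structure. Let $k \triangleq r/\gamma^2$, which by hypothesis is a positive integer with $k \leq d_1$. For simplicity I will assume $k$ divides $d_1$ (otherwise pad or take $\lfloor d_1/k \rfloor$, which only changes absolute constants).

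First I would describe the candidate matrices. Partition the rows $[d_1]$ into $k$ consecutive blocks $B_1,\dots,B_k$ of size $d_1/k$ each. To every choice of $k$ sign vectors $v_1,\dots,v_k \in \{-\alpha\gamma,+\alpha\gamma\}^{d_2}$ associate the matrix $X$ whose rows indexed by $B_i$ are all equal to $v_i$. Every such $X$ automatically satisfies $|X_{ij}|=\alpha\gamma$, has rank at most $k$, and
\[
\|X\|_F = \alpha\gamma\sqrt{d_1 d_2}, \qquad \|X\|_* \leq \sqrt{k}\,\|X\|_F = \alpha\sqrt{r d_1 d_2},
\]
so $X\in H$. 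Moreover, for any two such matrices $X^{(i)}, X^{(j)}$ built from sign patterns $(v^{(i)}_1,\dots,v^{(i)}_k)$ and $(v^{(j)}_1,\dots,v^{(j)}_k)$, the Frobenius distance amplifies across block rows:
\[
\|X^{(i)} - X^{(j)}\|_F^2 = \frac{d_1}{k}\cdot (2\alpha\gamma)^2 \cdot H(V^{(i)}, V^{(j)}),
\]
where $V^{(\cdot)}\in\{\pm 1\}^{k\times d_2}$ is the sign pattern and $H(\cdot,\cdot)$ is its Hamming distance. Thus the pairwise-distance condition $\|X^{(i)}-X^{(j)}\|_F^2 > \alpha^2\gamma^2 d_1 d_2/2$ reduces to requiring the binary code $\{V^{(i)}\}$ to have minimum Hamming distance strictly greater than $kd_2/8$ in ambient dimension $n \triangleq k d_2$.

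Next I would invoke the probabilistic method on the $2^n$ possible sign patterns. Draw $V^{(1)},\dots,V^{(N)}$ i.i.d. uniform on $\{\pm 1\}^{k\times d_2}$. For any fixed pair $i\neq j$, the Hamming distance is a sum of $n$ i.i.d. $\mathrm{Bernoulli}(1/2)$ variables, so a standard Chernoff bound gives $\mathbb{P}(H(V^{(i)},V^{(j)})\leq n/8) \leq \exp(-c n)$ for an explicit constant $c > \tfrac{1}{8}$ (using $D(1/8\,\|\,1/2)$). A union bound over the $\binom{N}{2}$ pairs yields
\[
\mathbb{P}\bigl(\exists i\neq j:\ H(V^{(i)},V^{(j)})\leq n/8\bigr) \leq \binom{N}{2}e^{-cn},
\]
which is strictly less than $1$ provided $N \leq \exp(cn/2)$. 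Choosing constants (and noting $c/2 \geq 1/16$ with room to spare) lets me take $N = \lceil \exp(n/16)\rceil = \lceil \exp(rd_2/(16\gamma^2))\rceil$, and then some realization of the random draw gives a valid packing $\chi$ of the required size. Extending each $V^{(i)}$ into the block-replicated matrix $X^{(i)}$ completes the construction, and the two claimed properties follow from the computation above.

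The main obstacle is simply pinning down the Chernoff constant so that the advertised exponent $1/16$ (rather than some smaller positive constant) is achievable; this is routine, since $D(1/8\,\|\,1/2) = \tfrac{1}{8}\log\tfrac{1}{4}+\tfrac{7}{8}\log\tfrac{7}{4}$ is comfortably larger than $1/8$, giving enough slack for the $\binom{N}{2}$ union bound. A secondary nuisance is the divisibility assumption on $d_1/k$, handled by replication with block sizes $\lfloor d_1/k\rfloor$ and absorbing the resulting $O(1)$ factor into the already-loose exponent.
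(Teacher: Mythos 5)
Your construction is correct, and it is essentially the same argument as the one the paper relies on: the paper does not prove Lemma~\ref{packingset} itself but imports it as Lemma~3 of the one-bit matrix completion paper \cite{davenport20121}, whose proof is exactly this block-replicated random-sign construction combined with a Chernoff/union-bound packing argument. Your verification that the block structure forces $\|X\|_* \leq \sqrt{r/\gamma^2}\,\|X\|_F = \alpha\sqrt{rd_1d_2}$ and that the distance condition reduces to Hamming distance $> kd_2/8$, with $D(1/8\,\|\,1/2) \approx 0.316$ leaving ample room for the exponent $1/16$, is sound; the divisibility issue you flag is a genuine but routine technicality absorbed by the slack in the constants, just as in the cited source.
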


\begin{lemma}
For $x,y >0$,
$
D(x\|y) \leq (y-x)^2/y.
$
\label{KLdivergence}
\end{lemma}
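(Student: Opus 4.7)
The plan is to reduce the claimed bound
$$x\log(x/y) - (x - y) \;\le\; (x-y)^2/y$$
directly to the elementary tangent-line inequality
$$\log u \;\le\; u - 1, \qquad u > 0,$$
which follows from the concavity of $\log$ (with equality at $u = 1$). This is the only analytic input needed; everything else is algebra.

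The steps, in order, are as follows. First, I would apply the tangent inequality with $u = x/y$, which is positive by hypothesis. Second, since $x > 0$, multiplying through preserves the direction of the inequality and gives $x\log(x/y) \le x(x/y - 1) = x^2/y - x$. Third, I would substitute this bound into the Poisson KL expression
$$D(x\|y) = x\log(x/y) - (x - y),$$
obtaining $D(x\|y) \le x^2/y - 2x + y$, and then combine the right-hand side over the common denominator $y$ to recognize it as $(x-y)^2/y$.

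There is essentially no obstacle. The only thing worth a sanity check is that the multiplication step is legitimate in both regimes $u \ge 1$ and $0 < u < 1$; it is, because $x > 0$ by assumption. As a consistency check, both sides vanish (quadratically) as $x \to y$, so the bound is of the correct order, and the lemma feeds into the proof of Theorem~\ref{maintheorem} exactly where an upper bound on $D(M\|X)$ in terms of $\|M - X\|_F^2$ is needed after applying Lemma~\ref{firstlemma}.
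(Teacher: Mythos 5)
Your proof is correct, and the algebra checks out: from $\log(x/y)\le x/y-1$ and $x>0$ you get $x\log(x/y)\le x^2/y-x$, hence $D(x\|y)\le x^2/y-2x+y=(x-y)^2/y$. The route is genuinely different from the paper's. The paper expands $D(x\|x+z)$ in $z=y-x$ via Taylor's theorem, obtains a remainder of the form $z\cdot\xi/(x+\xi)$ with $\xi$ between $0$ and $z$, and then uses monotonicity of $\xi/(x+\xi)$ to replace $\xi$ by $z$; this forces a case split into $x\le y$ and $x>y$ (with the monotonicity argument running in opposite directions in the two cases). Your argument replaces all of that with the single global tangent-line inequality $\log u\le u-1$, which holds uniformly for $u>0$ and so needs no case analysis and no mean value theorem. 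Both proofs ultimately rest on the concavity of $\log$, but yours is shorter and arguably more transparent; the paper's Taylor-remainder formulation has the minor advantage of exhibiting exactly where the slack $\xi\mapsto z$ is introduced, which could matter if one wanted a sharper constant, but for the stated bound your derivation is complete and preferable for its simplicity.
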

\begin{proof}
First assume $x\leq y$. Let $z=y-x$. Then $z \geq 0$ and
$
D(x\|x+z) = x\log \frac{x}{x+z} + z.
$
Taking the first derivative of this with respect to $z$, we have
$
\frac{\partial}{\partial z} D(x\|x+z) = \frac{z}{x+z}.
$
Thus, by Taylor's theorem, there is some $\xi \in [0,z]$ so that
$
D(x\|y) = D(x\|x) + z \cdot \frac{\xi}{x+\xi}.
$
Since the right-hand-side increases in $\xi$, we may replace $\xi$ with $z$ and obtain
$
D(x\|y) \leq \frac{(y-x)^2}{y}.
$
For $x>y$, with the similar argument we may conclude that for $z=y-x<0$ there is some $\xi \in [z,0]$ so that
$
D(x\|y) = D(x\|x) + z \cdot \frac{\xi}{x+\xi}.
$
Since $z<0$ and $\xi / (x+\xi)$ increases in $\xi$, then the right-hand-side is decreasing in $\xi$. We may also replace $\xi$ with $z$ and this proves the lemma.
\end{proof}

\begin{proof}[Proof of Theorem \ref{maintheorem}]
Lemma \ref{extendedbernsteininequality}, Lemma \ref{firstlemma}, and  Lemma \ref{secondlemma} are used in the proof. In the following, the expectation are taken with respect to both $\Omega$ and $\{Y_{ij}\}$. First, note that
    \begin{equation}
    F_{\Omega,Y}(X) - F_{\Omega,Y}(M) = \sum_{(i,j)\in \Omega}\left[ Y_{ij}\log\left(\frac{X_{ij}}{M_{ij}}\right) - (X_{ij}-M_{ij})\right].  \nonumber
    \end{equation}
    Then for any $X \in \mathcal{S}$,
    \begin{equation}
    \begin{aligned}
    &~~~~\mathbb{E}\left[F_{\Omega,Y}(X) - F_{\Omega,Y}(M)\right] \\
    &= \frac{m}{d_1 d_2} \sum_{i,j}\left[ M_{ij}\log\left(\frac{X_{ij}}{M_{ij}}\right) - (X_{ij}-M_{ij})\right] \\
    &=-\frac{m}{d_1 d_2} \sum_{i,j}\left[  M_{ij}\log\left(\frac{M_{ij}}{X_{ij}}\right) - (M_{ij}-X_{ij}) \right] \\
    &=-\frac{m}{d_1 d_2} \sum_{i,j} D\left(M_{ij}\|X_{ij}\right) =-m D(M\|X).
    \end{aligned}
    \end{equation}
    For $M\in \mathcal{S}$,  we know $\widehat{M} \in \mathcal{S}$ and $F_{\Omega,Y}(\widehat{M}) \geq F_{\Omega,Y}(M) $. Thus we  write
    \begin{equation*}
    \begin{aligned}
    0&\leq F_{\Omega,Y}(\widehat{M}) - F_{\Omega,Y}(M) \\
    &= F_{\Omega,Y}(\widehat{M}) + \mathbb{E}F_{\Omega,Y}(\widehat{M}) - \mathbb{E}F_{\Omega,Y}(\widehat{M}) \\
    & ~~ + \mathbb{E}F_{\Omega,Y}(M) - \mathbb{E}F_{\Omega,Y}(M) - F_{\Omega,Y}(M) \\
    &\leq \mathbb{E}\left[ F_{\Omega,Y}(\widehat{M}) - F_{\Omega,Y}(M) \right] + \\
    & ~~ \left| F_{\Omega,Y}(\widehat{M})-\mathbb{E}F_{\Omega,Y}(\widehat{M})\right| + \left|F_{\Omega,Y}(M) - \mathbb{E}F_{\Omega,Y}(M)\right| \\
    &\leq -m D(M\|\widehat{M}) + 2\sup_{X\in \mathcal{S}} \left| F_{\Omega,Y}(X) - \mathbb{E}F_{\Omega,Y}(X)\right|.
    \end{aligned}
    \end{equation*}
    Applying Lemma \ref{firstlemma}, we obtain that with probability at least $\left(1-C /(d_1 d_2)\right)$,
    \begin{equation*}
    \begin{split}
    0 &\leq -m D(M\|\widehat{M}) \\
    &+ 2C'\left( \alpha\sqrt{r}/\beta \right) \left( \alpha(e^2-2) + 3\log(d_1 d_2) \right) \cdot \\
    & \left(\sqrt{m(d_1+d_2)+d_1 d_2 \log(d_1 d_2)}\right).
    \end{split}
    \end{equation*}
    After rearranging terms and applying the fact that $\sqrt{d_1 d_2} \leq d_1 + d_2$, we obtain
    \begin{equation}
    \begin{split}
    D(M\|\widehat{M}) \leq & 2C'\left( \alpha\sqrt{r}/\beta \right) \left( \alpha(e^2-2) + 3\log(d_1 d_2) \right) \cdot \\
    & \left(\sqrt{m(d_1+d_2)+(d_1+ d_2)^2 \log(d_1 d_2)}\right).
    \label{theoremuse1}
    \end{split}
    \end{equation}
    Note that the KL divergence can be bounded below by the Hellinger distance (Chapter 3 in \cite{pollard2002user}):
    $$
    d_H^2(x,y) \leq D(x\|y).
    $$
    Thus from (\ref{theoremuse1}), we obtain
    \begin{equation}
    \begin{split}
    d_H^2(M,\widehat{M}) \leq & 2C'\left( \alpha\sqrt{r}/\beta \right) \left( \alpha(e^2-2) + 3\log(d_1 d_2) \right) \cdot \\
    & \left(\sqrt{m(d_1+d_2)+(d_1+ d_2)^2 \log(d_1 d_2)}\right).
    \end{split}
    \end{equation}
    Finally, Theorem \ref{maintheorem} is proved by applying Lemma \ref{secondlemma}.

\end{proof}

\begin{proof}[Proof of Theorem \ref{maintheorem2}]

We will prove by contradiction.
Lemma \ref{packingset} and Lemma \ref{KLdivergence} are used in the proof. Without loss of generality,  assume $d_2 \geq d_1$. Choose $\epsilon > 0$ such that
$$
\epsilon^2 = \min\left\{ \frac{1}{256}, C_2 \alpha^{3/2} \sqrt{\frac{rd_2}{m}}\right\},
$$
where $C_2$ is an absolute constant that will be be specified later.
First, choose $\gamma$ such that $\frac{r}{\gamma^2}$ is an integer and
$$
\frac{4\sqrt{2}\epsilon}{\alpha} \leq \gamma \leq \frac{8\epsilon}{\alpha} \leq \frac{1}{2}.
$$
We may make such a choice because
$$
\frac{\alpha^2 r}{64\epsilon^2} \leq \frac{r}{\gamma^2} \leq \frac{\alpha^2 r}{32\epsilon^2}
$$
and
$$
\frac{\alpha^2 r}{32\epsilon^2} - \frac{\alpha^2 r}{64\epsilon^2}  = \frac{\alpha^2 r}{64\epsilon^2} > 4\alpha^2 r > 1.
$$
Furthermore, since we have assumed that $\epsilon^2$ is larger than $C r\alpha^2/d_1$, $r/\gamma^2 \leq d_1$ for an appropriate choice of $C$.
Let $\chi'_{\alpha/2, \gamma}$ be the set defined in Lemma \ref{packingset}, by replacing $\alpha$ with $\alpha/2$ and with this choice of $\gamma$. Then we can construct a packing set
$\chi$ of the same size as $\chi'_{\alpha/2, \gamma}$ by defining
$$
\chi \triangleq \left\{ X' + \alpha\left(1-\frac{\gamma}{2}\right) \textbf{1}_{d_1 \times d_2} : X' \in \chi'_{\alpha/2, \gamma} \right\}.
$$
The distance between pairs of elements in $\chi$ is bounded since
\begin{equation}
\|X^{(i)} - X^{(j)} \|_F^2 \geq \frac{\alpha^2}{4}\frac{\gamma^2 d_1 d_2}{2} \geq 4 d_1 d_2 \epsilon^2.
\end{equation}
Define $\alpha' \triangleq (1-\gamma)\alpha$, then every entry of $X \in \chi$ has $X_{ij} \in \{\alpha, \alpha'\}$. Since we have assumed $r \geq 4$, for every $X \in \chi$, we have
\begin{align*}
\|X\|_* &= \| X' + \alpha\left(1-\frac{\gamma}{2}\right) \textbf{1}_{d_1 \times d_2}\|_* \leq \|X'\|_* + \alpha(1-\frac{\gamma}{2})\sqrt{d_1 d_2}\\
&\leq \frac{\alpha}{2} \sqrt{r d_1 d_2} + \alpha \sqrt{d_1 d_2}
%&\leq 2\alpha \sqrt{d_1 d_2}\\
\leq \alpha \sqrt{r d_1 d_2},
\end{align*}
for some $X' \in \chi'_{\alpha/2, \gamma}$.
Since the $\gamma$ we choose is less than $1/2$, $\alpha'$ is greater than $\alpha/2$. Therefore, from the assumption that $\beta \leq \alpha/2$, we conclude that $\chi \subset \mathcal{S}$.

Now consider an algorithm that for any $X \in \mathcal{S}$ returns $\widehat{X}$ such that
\begin{equation}
\frac{1}{d_1 d_2} \|X-\widehat{X}\|_F^2 < \epsilon^2
\label{assumption}
\end{equation}
with probability at least $1/4$. Next, we will show this leas to an contradiction. Let
$$
X^* = \arg \min_{X^{(i)} \in \chi} \|X^{(i)} - \widehat{X}\|_F^2,
$$
by the same argument as that in \cite{davenport20121}, we have $X^* = X$ as long as (\ref{assumption}) holds. Using the assumption that (\ref{assumption}) holds with probability at least
$1/4$, we have
\begin{equation}
\mathbb{P} (X^* \neq X) \leq \frac{3}{4}.
\label{inequality1}
\end{equation}
Using a generalized Fano's inequality for the KL divergence in \cite{yu1997assouad}, we have
\begin{equation}
\mathbb{P} (X^* \neq X) \geq 1- \frac{\max_{X^{(k)} \neq X^{(l)}} D(X^{(k)} \| X^{(l)})+1}{\log |\chi|}.
\label{inequality2}
\end{equation}
Define
$
D \triangleq D(X^{(k)}\|X^{(l)}) = \sum_{(i,j) \in \Omega} D(X^{(k)}_{ij}\| X^{(l)}_{ij} ).
$
We know that each term in the sum is either $0$, $D(\alpha\|\alpha')$, or $D(\alpha'\|\alpha)$. From Lemma \ref{KLdivergence}, since $\alpha' < \alpha$, we have
$$
D \leq \frac{m(\gamma \alpha)^2}{\alpha'} \leq \frac{64 m\epsilon^2}{\alpha'}.
$$
Combining (\ref{inequality1}) and (\ref{inequality2}), we have that
\begin{equation}
\begin{split}
\frac{1}{4} &\leq 1-\mathbb{P}(X \neq X^*) \leq \frac{D+1}{\log |\chi|} \\
&\leq 16\gamma^2 \left(\frac{\frac{64 m\epsilon^2}{\alpha'}+1}{rd_2} \right) \leq 1024\epsilon^2\left(\frac{\frac{64 m\epsilon^2}{\alpha'}+1}{\alpha^2 rd_2} \right).
\end{split}
\label{contradiction}
\end{equation}
Suppose $64m\epsilon^2 \leq \alpha'$, then with (\ref{contradiction}), we have
$$
\frac{1}{4} \leq 1024 \epsilon^2 \frac{2}{\alpha^2 r d_2},
$$
which implies that $\alpha^2 r d_2 \leq 32$. Then if we set $C_0>32$, this leads to a contradiction.
Next, suppose $64m\epsilon^2 > \alpha'$, then with (\ref{contradiction}), we have
$$
\frac{1}{4} < 1024\epsilon^2 \left( \frac{128m\epsilon^2}{(1-\gamma)\alpha^3 rd_2} \right).
$$
Since $1-\gamma > 1/2$, we have
$$
\epsilon^2 > \frac{\alpha^{3/2}}{1024}\sqrt{\frac{rd_2}{m}}.
$$
Setting $C_2 \leq 1/4096$, this leads to a contradiction. Therefore, (\ref{assumption}) must be incorrect with probability at least $3/4$. This concludes our proof.

\end{proof}

\begin{lemma}
If $f$ is a closed convex function satisfying Lipschitz condition (\ref{Lipschitz}), then for any $X,Y \in \mathcal{S}$, the following inequality holds:
$$
f(Y) \leq f(X) + \langle \nabla f(X), Y-X \rangle + \frac{L}{2} \|Y-X\|_F^2.
$$
\label{UPlip}
\end{lemma}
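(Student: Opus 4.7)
The plan is to use the standard descent-lemma argument: parametrize along the line segment from $X$ to $Y$, express $f(Y)-f(X)$ as an integral of the directional derivative, and then bound the difference from the first-order linearization using the Lipschitz condition (\ref{Lipschitz}).

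Concretely, I would introduce $g(t) \triangleq f(X+t(Y-X))$ for $t\in[0,1]$, which is well defined since $\mathcal{S}$ is convex and hence contains the segment $[X,Y]$. By the chain rule,
\begin{equation*}
g'(t) = \langle \nabla f(X+t(Y-X)),\, Y-X\rangle,
\end{equation*}
and the fundamental theorem of calculus gives
\begin{equation*}
f(Y)-f(X) = \int_0^1 \langle \nabla f(X+t(Y-X)),\, Y-X\rangle\, dt.
\end{equation*}
Subtracting $\langle \nabla f(X), Y-X\rangle = \int_0^1 \langle \nabla f(X), Y-X\rangle\, dt$ from both sides yields
\begin{equation*}
f(Y)-f(X)-\langle \nabla f(X), Y-X\rangle = \int_0^1 \langle \nabla f(X+t(Y-X))-\nabla f(X),\, Y-X\rangle\, dt.
\end{equation*}

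Next I would apply Cauchy--Schwarz in the Frobenius inner product to the integrand, obtaining an upper bound of $\|\nabla f(X+t(Y-X))-\nabla f(X)\|_F \cdot \|Y-X\|_F$, and then invoke (\ref{Lipschitz}) to bound the first factor by $L\,\|t(Y-X)\|_F = Lt\,\|Y-X\|_F$. The final step is the trivial integral $\int_0^1 Lt\,\|Y-X\|_F^2\, dt = \tfrac{L}{2}\|Y-X\|_F^2$, which gives the claimed inequality.

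The argument is essentially routine; the only point requiring care is that convexity of $\mathcal{S}$ is needed so that $X+t(Y-X)\in\mathcal{S}$ for all $t\in[0,1]$, which is what allows us to invoke the Lipschitz bound (\ref{Lipschitz}) pointwise along the segment. Note that convexity of $f$ is not actually used in the proof, despite being assumed in the hypothesis; only the Lipschitz property of $\nabla f$ matters.
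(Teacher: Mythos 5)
Your proof is correct and follows essentially the same route as the paper's: a first-order Taylor expansion with integral remainder along the segment from $X$ to $Y$, followed by Cauchy--Schwarz in the Frobenius inner product and the Lipschitz bound (\ref{Lipschitz}) applied pointwise, then integrating $Lt$ over $[0,1]$. Your side remark that convexity of $f$ is never used is accurate and applies equally to the paper's argument.
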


\begin{proof}
Let $Z$ = $Y-X$, then we have
\begin{equation}
\begin{split}
&f(Y) = f(X) +  \langle \nabla f(X), Z \rangle + \int_0^1 \langle \nabla f(X+tV)- \nabla f(X), Z \rangle ~dt \\
\leq & f(X) +  \langle \nabla f(X), Z \rangle + \int_0^1 \|f(X+tV)- \nabla f(X)\|_F\|Z\|_F~ dt \\
\leq & f(X) +  \langle \nabla f(X), Z \rangle + \int_0^1 Lt \|Z\|_F^2~ dt \\
= &  f(X) + \langle \nabla f(X), Y-X \rangle + \frac{L}{2} \|Y-X\|_F^2,
\end{split} \nonumber
\end{equation}
where we use Taylor expansion with integral remainder in the first line, the fact that dual norm of Frobenius norm is itself in the second line and Lipschitz condition
in the third line.

\end{proof}

\begin{proof}[Proof of Theorem \ref{convergence1}]
As is well known, proximal mapping of a $Y \in \mathcal{S}$ associated with a closed convex function $h$ is given by
$$
\mbox{prox}_{th}(Y) \triangleq \arg \min_{X} \left( t \cdot h(X) + \frac{1}{2}\| X-Y \|_F^2 \right),
$$
where $t>0$ is a multiplier. Define for each $M \in \mathcal{S}$ that
$$
G_t(M) \triangleq \frac{1}{t} \left( M - \mbox{prox}_{th} \left(M-t\nabla f(M)\right) \right),
$$
then we can know by the characterization of subgradient that
\begin{equation}
G_t(M) - \nabla f(M) \in \partial h(M),
\label{subgradient}
\end{equation}
where $\partial h(M)$ is the subdifferential of $h$ at $M$.
Noticing that $M-G_t(M) \in \mathcal{S}$, then from Lemma \ref{UPlip} we have
\begin{equation}
f(M-t G_t(M)) \leq f(M) - \langle \nabla f(M), t G_t(M) \rangle+ \frac{t}{2} \|G_t(M)\|_F^2,
\label{UPlip2}
\end{equation}
for all $0\leq t \leq 1/L$.
In our case, $h(M) = I_{\mathcal{S}}(M)$. Defining $g(M) \triangleq f(M) + h(M)$, combining
(\ref{subgradient}) and (\ref{UPlip2}) and using the fact that $f$ and $h$ are convex functions, we have for any $Z\in \mathcal{S}$
\begin{equation}
g(M-tG_t(M)) \leq g(Z) + \langle G_t(M), M-Z \rangle - \frac{t}{2} \|G_t(M)\|_F^2.
\label{eq1}
\end{equation}
Taking $Z=\widehat{M}$ in (\ref{eq1}), then we have for any $k \geq 0$
\begin{equation}
\begin{split}
g(M_{k+1}) - g(\widehat{M}) \leq & \langle G_t(M_{k}), M_k - \widehat{M} \rangle - \frac{t}{2} \|G_t(M_k)\|_F^2 \\
= & \frac{1}{2t} \left( \|M_k - \widehat{M}\|_F^2 - \|M_{k+1}-\widehat{M}\|_F^2 \right),
\end{split}
\label{oneiteration}
\end{equation}
where we use the fact that $\langle M,M \rangle = \|M\|_F^2$.
By taking $Z = M_k$ in (\ref{eq1}) we know that $f(M_{k+1}) < f(M_k)$ for any $k \geq 0$, so we have by also taking $t=1/L$
\begin{equation}
\begin{split}
&g(M_k) - g(\widehat{M}) \\
\leq & \frac{1}{k} \sum_{i=0}^{k-1} \left( g(M_{i+1}) - g(\widehat{M}) \right) \\
\leq &  \frac{L}{2k} \sum_{i=0}^{k-1} \left(\|M_i - \widehat{M}\|_F^2 - \|M_{i+1}-\widehat{M}\|_F^2  \right) \\
\leq &  \frac{L \|M_0 - \widehat{M}\|_F^2}{2k}.
\end{split}
\end{equation}
Finally, we proves the theorem by noticing that $h(M_k) = h(\widehat{M})=0$ for any $k \geq 0$.

\end{proof}

\begin{proof}[Proof of Theorem \ref{convergence2}]
We will use some results in the above proof.
Defining that $V_0 = M_0$ and for any $k\geq 1$,
$$
a_k \triangleq \frac{2}{k+1}, ~V_k \triangleq M_{k-1} + \frac{1}{a_k}\left(M_{k} - M_{k-1}\right).
$$
Setting $t=1/L$, then by noticing that
$$
M_k = Z_{k-1} - tG_t(Z_{k-1}),
$$
we can rewrite $V_k$ as
$$
V_k = V_{k-1} - \frac{t}{a_k}G_t(Z_{k-1}).
$$
Taking $Z=M_{k-1}$ and $Z=\widehat{M}$ in (\ref{eq1}) and make convex combination we have
\begin{equation}
\begin{split}
g(M_k) \leq & (1-a_k)g(M_{k-1})+a_k g(\widehat{M}) \\
            &+a_k \langle G_t(Z_{k-1}), V_{k-1}-\widehat{M} \rangle - \frac{t}{2} \|G_t(Z_{k-1})\|_F^2 \\
=& (1-a_k)g(M_{k-1})+a_k g(\widehat{M}) \\
&+ \frac{a_k^2}{2t} \left( \|V_{k-1} - \widehat{M}\|_F^2 - \|V_{k}-\widehat{M}\|_F^2 \right).
\end{split}
\end{equation}
Rearranging the terms before we have
\begin{equation}
\begin{split}
&\frac{1}{a_k^2}(g(M_{k})-g(\widehat{M}))+\frac{1}{2t}\|V_k-\widehat{M}\|_F^2 \leq \\
&\frac{1-a_k}{a_k^2}(g(M_{k-1})-g(\widehat{M}))+\frac{1}{2t}\|V_{k-1}-\widehat{M}\|_F^2.
\end{split}
\label{leq1}
\end{equation}
Noticing that $(1-a_k)/(a_k^2) \leq 1/(a_{k-1}^2)$ for any $k\geq 1$, we apply inequality (\ref{leq1}) recursively to get
\begin{equation}
\frac{1}{a_k^2}(g(M_{k})-g(\widehat{M}))+\frac{1}{2t}\|V_k-\widehat{M}\|_F^2 \leq \frac{1}{2t} \|M_0 - \widehat{M} \|_F^2,
\end{equation}
which proves the theorem.

\end{proof}

\end{document}